
\documentclass{article}

\usepackage{microtype}
\usepackage{graphicx}
\usepackage{subfigure}
\usepackage{enumitem}
\usepackage{float}
\usepackage{booktabs} 

\usepackage{hyperref}
\usepackage{url}



\usepackage[accepted]{icml2024}

\usepackage{amsmath}
\usepackage{amssymb}
\usepackage{mathtools}
\usepackage{amsthm}
\usepackage{natbib}

\usepackage[capitalize,noabbrev]{cleveref}
\usepackage{comment}

\newtheorem{theorem}{Theorem}

\newtheorem{lemma}{Lemma}

\theoremstyle{definition}

\newtheorem{assumption}{Assumption}
\newtheorem{remark}{Remark}
\newtheorem{example}{Example}

\newcommand*\diff{\mathop{}\!\mathrm{d}}
\newcommand{\z}{{\bf z}}
\newcommand{\x}{{\bf x}}
\newcommand{\y}{{\bf y}}

\newcommand{\X}{\mathcal{X}}
\newcommand{\Y}{\mathcal{Y}}

\newcommand{\Hess}{{\bf H}}
\newcommand{\norm}[1]{\left\lVert#1\right\rVert}
\DeclareMathOperator*{\argmin}{arg\,min}

\makeatletter

\usepackage[textsize=tiny]{todonotes}

\icmltitlerunning{Overcoming Saturation in Density Ratio Estimation by Iterated Regularization}

\begin{document}

\twocolumn[
\icmltitle{Overcoming Saturation in Density Ratio Estimation by Iterated Regularization}




\begin{icmlauthorlist}
\icmlauthor{Lukas Gruber}{ml}
\icmlauthor{Markus Holzleitner}{mal}
\icmlauthor{Johannes Lehner}{ml}
\icmlauthor{Sepp Hochreiter}{ml,nxai}
\icmlauthor{Werner Zellinger}{ric}
\end{icmlauthorlist}

\icmlaffiliation{ml}{ELLIS Unit Linz and LIT AI Lab, Institute for Machine Learning, Johannes Kepler University Linz, Austria}
\icmlaffiliation{mal}{MaLGa Center, Department of Mathematics, University of Genoa}
\icmlaffiliation{nxai}{NXAI GmbH, Linz, Austria}
\icmlaffiliation{ric}{Johann Radon Institute for Computational and Applied Mathematics, Austrian Academy of Sciences}

\icmlcorrespondingauthor{Werner Zellinger}{werner.zellinger@ricam.oeaw.ac.at}
\icmlcorrespondingauthor{Lukas Gruber}{gruber@ml.jku.at}

\icmlkeywords{Machine Learning, ICML}

\vskip 0.3in
]



\printAffiliationsAndNotice{}

\begin{abstract}
Estimating the ratio of two probability densities from finitely many samples, is a central task in machine learning and statistics.
In this work, we show that a large class of kernel methods for density ratio estimation suffers from error saturation, which prevents algorithms from achieving fast error convergence rates on highly regular learning problems.
To resolve saturation, we introduce iterated regularization in density ratio estimation to achieve fast error rates.
Our methods outperform its non-iteratively regularized versions on benchmarks for density ratio estimation as well as on large-scale evaluations for importance-weighted ensembling of deep unsupervised domain adaptation models.
\end{abstract}

\section{Introduction}
Given two i.i.d.~samples $\x=\{x_i\}_{i=1}^m$ and $\x'=\{x_i'\}_{i=1}^n$, drawn from two probability measures $P$ and $Q$, respectively, the problem of density ratio estimation is to approximate $\beta:=\frac{\diff P}{\diff Q}$.
This problem arises in anomaly detection~\citep{smola2009relative,hido2011statistical}, two-sample testing~\citep{keziou2005test,kanamori2011f}, divergence estimation~\citep{nguyen2007estimating,nguyen2010estimating}, unsupervised domain adaptation~\citep{shimodaira2000improving,dinu2022aggregation}, generative modeling~\citep{mohamed2016learning}, conditional density estimation~\citep{schuster2020kernel}, and PU learning~\citep{kato2019learning}.

A large class of methods for density ratio estimation relies on the estimation of minimizers of the form~\citet{sugiyama2012densitybregman,sugiyama2012density,zellinger2023adaptive}
\begin{align}
    \label{eq:regularized_Bregman_objective}
    f^\lambda:=\argmin_{f\in\mathcal{H}} B_F\!\left(\beta,g(f)\right) + \frac{\lambda}{2} \norm{f}^2,
\end{align}
where $g(f^\lambda)$ is a model for the unknown density ratio $\beta$ with $f\in\mathcal{H}$
in a reproducing kernel Hilbert space (RKHS) $\mathcal{H}$,
$B_F(\beta,\widetilde{\beta}):=F(\beta)-F(\widetilde{\beta})-\nabla F(\widetilde{\beta})[\beta-\widetilde{\beta}]$ is a Bregman divergence generated by some convex functional $F:L^1(Q)\to\mathbb{R}$,
$\norm{.}$ is a norm on $\mathcal{H}$, and $\lambda>0$ is a regularization parameter.
From the view point of regularization theory, Eq.~\eqref{eq:regularized_Bregman_objective} can be seen as \textit{Bregmanized} density-ratio estimation.
The idea of Bregmanization has been already successfully employed in the context of total variation regularization~\cite{osher2005iterative,resmerita2006error}.

In the following, we exemplify four methods in this class. We refer to~\citet[Section~7]{sugiyama2012density} and~\citet{menon2016linking,kato2021non} for further examples.%
~\\
\begin{example}%
\label{ex:introduction}
~\vspace{-1em}%
\begin{itemize}[leftmargin=1em]
\setlength\itemsep{0em}
    \item The kernel unconstrained least squares importance fitting procedure (KuLSIF)~\citep{kanamori2009least} can be obtained from Eq.~\eqref{eq:regularized_Bregman_objective} by $F(h)=\int_\X (h(x)-1)^2/2\diff Q(x)$ and $g(f)=f$.
    \item The logistic regression approach (LR) employed in~\citet[Section~7]{bickel2009discriminative} can be obtained using $F(h)=\int_\X h(x)\log(h(x))-(1+h(x))\log(1+h(x))\diff Q(x)$ and $g(f)=e^{f}$.
    \item The exponential approach (Exp) in~\citet{menon2016linking} is realized by $F(h)=\int_\X h(x)^{-3/2}\diff Q(x)$, $g(f)=e^{2 f}$.
    \item The square loss approach (SQ) in~\citet{menon2016linking} originates from $F(h)=\int_X 1/(2 h(x) + 2)\diff Q(x)$, $g(f)=\frac{-1+2 f}{2-2 f}$.
    \end{itemize}
\end{example}

However, state-of-the-art error guarantees as~\citet{kanamori2012statistical,que2013inverse,gizewski2022regularization,zellinger2023adaptive} for the methods above are sub-optimal for problems of high regularity, i.e., they don't achieve optimal convergence rates when increasing the number of samples, see Theorem~\ref{thm:error_rates_result} and Figure~\ref{fig:saturation}.
This issue is called \textit{saturation} in inverse problems~\citep{engl1996regularization} and learning theory~\citep{bauer2007regularization,li2022saturation}, and its negative effect in supervised learning has been recently highlighted~\citep{beugnot2021beyond}.

Inspired by the Bregman method~\citep{bregman1967relaxation} and Bregmanization~\citep{osher2005iterative,resmerita2006error}, we approach saturation of Eq.~\eqref{eq:regularized_Bregman_objective} by iterated regularization~\cite{thomas1979approximation} of the form
\begin{align}
    \label{eq:novel_objective}
    f^{\lambda,t+1}:=\argmin_{f\in\mathcal{H}} B_F\!\left(\beta,g(f)\right) + \frac{\lambda}{2} \norm{f-f^{\lambda,t}}^2
\end{align}
with $f^{\lambda,0}=0$.
We prove that iterative density ratio estimation methods, according to Eq.~\eqref{eq:novel_objective}, do not suffer from saturation.
For $t=1$, Eq.~\eqref{eq:novel_objective} includes, e.g., approaches in the example above.
As a consequence, we are able to improve the empirical performance of these approaches on benchmark experiments, using a higher iteration number $t>1$ ($t\leq 10$ in our experiments).

Density ratio estimation is an integral part of many parameter choice methods in domain adaptation, see e.g.~\citet{sugiyama2007covariate,you2019towards,dinu2022aggregation}.
Based on our iteratively regularized approaches, we are able to increase the performance of importance-weighted ensembling in domain adaptation on several large-scale domain adaptation tasks.

Our main contributions are summarized as follows:%
~\vspace{-.5em}%
\begin{itemize}[leftmargin=1em]
\setlength\itemsep{0em}
    \item We introduce iterated Tikhonov regularization for a large class of methods for density ratio estimation.
    \item We provide the (to the best of our knowledge first) non-saturating error bounds for density ratio estimation, which are optimal for the square loss approach~\cite{menon2016linking} in the sense of~\citet{caponnetto2007optimal}.
    \item Our new iteratively regularized methods outperform its non-iteratively regularized versions on average on all benchmark datasets.
    \item Consistent performance improvements are also observed in the application of importance-weighted ensembling of deep unsupervised domain adaptation models.
\end{itemize}

\begin{figure*}[t]
  \centering
  \begin{minipage}[b]{0.41\textwidth}
    \includegraphics[width=\textwidth]{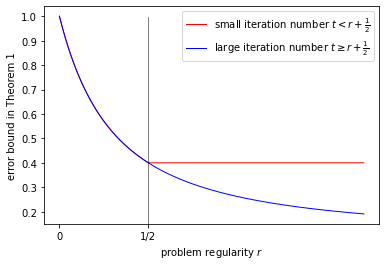}
  \end{minipage}
  \quad\quad
  \begin{minipage}[b]{0.45\textwidth}
    \includegraphics[width=\textwidth]{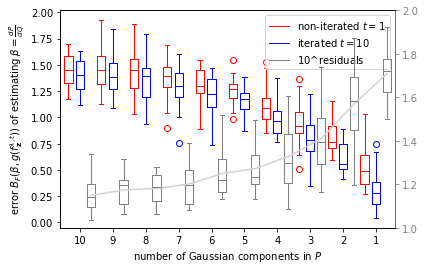}
    
  \end{minipage}
  \caption{Saturation issue of classical methods Eq.~\eqref{eq:regularized_Bregman_objective}
  versus our novel iteratively regularized approach Eq.~\eqref{eq:novel_objective}.
  Left: Error rates proven in Theorem~\ref{thm:error_rates_result} for classical methods (red) and ours (blue). Right: Error for classical KuLSIF~\cite{kanamori2009least} method with $F(h)=\int (h(x)-1)^2/2 q(x)\diff x$ in Eq.~\eqref{eq:regularized_Bregman_objective} (red), and our iteratively regularized approach (blue) applied to Gaussian mixture $P$ and Gaussian distribution $Q$; smaller number of components allows higher regularity index $r$. The residual differences (grey) between the methods increase with higher regularity.
  }
  \label{fig:saturation}
\end{figure*}

\section{Related Work}
\label{sec:related_work}

It was first observed in~\citet{sugiyama2012densitybregman} that many loss functions for density ratio estimation can be modeled by a Bregman divergence $B_F(\beta,g(f))$ as used in Eq.~\eqref{eq:regularized_Bregman_objective}, see also~\citet{menon2016linking,kato2021non} and references therein.

The study of saturation issues in regularization dates back to inverse problems literature~\citep{neubauer1997converse,engl1996regularization} and has been further investigated in learning theory~\citep{bauer2007regularization,caponnetto2007optimal,li2022saturation,lin2020optimal}; which includes recent studies for general (self-concordant) convex loss functions~\cite{beugnot2021beyond} and vector-valued output spaces~\cite{meunier2024optimal}.
Our work can be regarded as an extension of many of these investigations from supervised learning to density ratio estimation.

There are some theoretical results on the large class of algorithms following variations of Eq.~\eqref{eq:regularized_Bregman_objective}, see~\citet{kanamori2012statistical,kato2021non,gizewski2022regularization,zellinger2023adaptive}.
However, to the best of our knowledge, none of these works achieves the optimal error rate discovered in~\citet{caponnetto2007optimal} for supervised learning (for $r>\frac{1}{2}$ in Assumption~\ref{ass:source_condition}).
We are also not aware of any approach provably overcoming the saturation problem of Tikhonov regularization $\norm{f}^2$ in Eq.~\eqref{eq:regularized_Bregman_objective}, except~\citet{nguyen2023regularized} who study iterated Lavrentiev regularization only for the special case of KuLSIF. They include only a small numerical visualization and their pointwise error measure is different from our $B_F$.
Our approach overcomes saturation and provably achieves the optimal rate of square loss.

As noted above, Eq.~\eqref{eq:novel_objective} is inspired by the Bregman method~\cite{bregman1967relaxation} and Bregmanization~\cite{osher2005iterative,resmerita2006error} as studied in Inverse Problems.
It is demonstrated in~\citet{resmerita2006error}, that regularization by Bregman distance can be considered as an instance of \textit{enhancing techniques}, commonly used in image processing.
We refer to the book~\cite{scherzer2009variational} for in-depth convergence studies.

Eq.~\eqref{eq:empirical_loss_objective} identifies the proposed methods as iterated Tikhonov regularizations, which have been studied in Inverse Problems literature, e.g., for linear problems~\cite{hanke1998nonstationary}, for non-linear problems~\cite{scherzer1993convergence}, in the continuous case~\cite{scherzer2001inverse}, for non-quadratic penalties~\cite{tadmor2004multiscale,kindermann2023multiscale} and for non-linear operators combined with non-linear penalty~\cite{modin2019multiscale}.
All these studies can serve as a basis for analysing Eq.~\eqref{eq:novel_objective}.

Our analysis of the sampling behaviour extends fast error rates from kernel regression~\citep{beugnot2021beyond} 
(which relies on~\citet{Engl:96,bauer2007regularization})
to density ratio estimation, using the duality between divergence estimation~\citep{reid2011information} and binary classification~\citep{menon2016linking}, and the self-concordance property of associated loss functions~\citep{bach2010self,marteau2019beyond}.

It is important to note that iterated Tikhonov regularization is intrinsically related to the \textit{proximal point method}~\citep{rockafellar1976monotone}.

The focus of our work is to improve density ratio estimation for methods with theoretical guarantees such as kernel methods and infinitely wide neural networks, see~\citet{Jacot:18}. For general deep learning, our theory does not apply, as additional regularizations appear~\citep{Gidel:19}. However, some recent works identify scientific deep density ratio estimation problems. For example~\citet{rhodes2020telescoping} and~\citet{kato2021non,Kiryo:17} (PU-learning) and~\citet{Srivastava:23} discuss problems (e.g., density-chasm) appearing for very different densities; note that saturation also happens for very similar densities (see~\cref{fig:beugnot}). Notably, \citet{Choi:21} propose an approach by training normalizing flows to obtain closer and simpler densities.
Another important recent line of research is density ratio estimation for correcting deep generative diffusion models~\citep{Kim:23}.
Notably, in~\citet{Kim:24}, the Bregman divergence approach in Eq.~\eqref{eq:regularized_Bregman_objective} is extended to time-dependent correction and they identify further interesting Bregman divergences. 

In our empirical studies, we rely on the benchmark experiments for density ratio estimation of~\citet{kanamori2012statistical} and the large scale-domain adaptation framework of~\citet{ragab2023adatime} with importance weighting based parameter choice as introduced in~\citet{dinu2022aggregation}.

\section{Notation}
\label{sec:notation}

\paragraph*{Class Probability Estimation}

Let us denote by $\mathcal{X}$ a (compact) input space and by $\mathcal{Y}:=\{-1,1\}$ the binary label space and assume that an input $x$ is drawn with probability\footnote{See the supplementary material of~\cite{menon2016linking} for probabilities $\pi\neq \frac{1}{2}$.} $\frac{1}{2}$ from $P$ and $Q$.
That is, there is a probability measure $\rho$ on $\mathcal{X}\times\mathcal{Y}$, with conditional measures
$\rho(x|y=1):=P(x), \rho(x|y=-1):=Q(x)$ and marginal measure $\rho_\mathcal{Y}$ defined as Bernoulli measure assigning probability $\frac{1}{2}$ to both events $y=1, y=-1$, see~\citet{reid2010composite}.
That is, the sample $\z:=(x_i,1)_{i=1}^m\cup (x_i',-1)_{i=1}^n$ can be regarded as an i.i.d.~sample of an $\mathcal{X}\times\mathcal{Y}$-valued random variable $Z$ with measure $\rho$.

For a \textit{loss} function $\ell:\mathcal{Y}\times\mathbb{R}\to\mathbb{R}$,
we measure by $\ell(1,f(x))$ and $\ell(-1,f(x))$, respectively, the error of a classifier $f(x)$ which predicts whether $x$ is drawn from $P$ or $Q$.
A loss function $\ell$ is called \textit{strictly proper composite}~\citep{buja2005loss} if there exists an invertible link function $\Psi:[0,1]\to\mathbb{R}$ so that the Bayes optimal model $f^\ast:=\argmin_{f:\mathcal{X}\to\mathcal{Y}} \mathcal{R}(f)$ with expected risk $\mathcal{R}(f):=\mathbb{E}_{(x,y)\sim\rho}[\ell(x,f(x))]$ is achieved by $f^\ast=\Psi\circ \rho(y=1|x)$.
We denote by $G(u):= u \ell(1,\Psi(u))+(1-u) \ell(-1,\Psi(u))$ the conditional \textit{Bayes risk} of $\ell$.
We will also denote by
$\ell_z:\mathcal{H}\to\mathbb{R}$ the function defined by $\ell_{z}(f):=\ell(y,f(x))$ for $z=(x,y)$.

Throughout this work, we neglect pathological loss functions and only consider strictly proper composite losses $\ell$ with twice differentiable Bayes risk $G$.
\paragraph*{Learning in RKHS}

In this work, we focus on models $f\in\mathcal{H}$ in a reproducing kernel Hilbert space $\mathcal{H}$ with continuous bounded kernel $k:\mathcal{X}\times\mathcal{X}\to\mathbb{R}$.
Furthermore, for simplicity,  for any considered loss function $\ell$, we will assume that the expected risk minimizer $f_\mathcal{H}:=\argmin_{f\in\mathcal{H}} \mathcal{R}(f)$ exists and is in $\mathcal{H}$.

\section{Iterated Regularization}
\label{sec:iterated_tikhonov}

We rely on the following lemma to approximate the solution of Eq.~\eqref{eq:novel_objective} from the data $\z$.
\begin{lemma}[{\citet{menon2016linking}}]
\label{lemma:bregman_form_of_loss}
Any strictly proper composite loss $\ell$ with invertible link $\Psi:[0,1]\to\mathbb{R}$ and twice differentiable Bayes risk $G:[0,1]\to\mathbb{R}$ satisfies
\begin{align}
    \label{eq:bregman_identity_for_density_ratio}
    \frac{1}{2} B_F(\beta,g(f)) = \mathcal{R}(f)-\mathcal{R}(f^\ast)
\end{align}
with $F(h):=-\int_\X (1+h(x)) G\!\left(\frac{h(x)}{1+h(x)}\right)\diff Q(x)$ and $g(f):=\frac{\Psi^{-1}\circ f}{1-\Psi^{-1}\circ f}$.
\end{lemma}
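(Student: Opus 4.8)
Write $\eta(x):=\rho(y=1\mid x)$. Since the label is Bernoulli$(\tfrac12)$ with $\rho(\cdot\mid y=1)=P$ and $\rho(\cdot\mid y=-1)=Q$, Bayes' rule gives $\eta=\frac{\diff P}{\diff(P+Q)}$, hence $\beta=\frac{\eta}{1-\eta}$, $1+\beta=\frac{1}{1-\eta}$, and the marginal is $\rho_\X=\tfrac12(P+Q)$, i.e.\ $\diff Q=2(1-\eta)\diff\rho_\X$. The plan is to write both sides of \eqref{eq:bregman_identity_for_density_ratio} as integrals over $\X$ of pointwise Bregman-type terms built from $G$ and then match the integrands. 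First I would rewrite the excess risk: conditioning on $x$,
\begin{equation*}
\mathcal{R}(f)=\int_\X\Big(\eta(x)\,\ell(1,f(x))+(1-\eta(x))\,\ell(-1,f(x))\Big)\diff\rho_\X(x).
\end{equation*}
Setting $u:=\Psi^{-1}\circ f\in[0,1]$ and the partial losses $\ell_1(v):=\ell(1,\Psi(v))$, $\ell_{-1}(v):=\ell(-1,\Psi(v))$, the integrand equals $L(\eta,u):=\eta\,\ell_1(u)+(1-\eta)\,\ell_{-1}(u)$. Strict properness means $v\mapsto L(\eta,v)$ is minimized at $v=\eta$ with value $G(\eta)$; the stationarity $\partial_v L(\eta,v)|_{v=\eta}=0$ combined with $G(\eta)=L(\eta,\eta)$ yields $G'(\eta)=\ell_1(\eta)-\ell_{-1}(\eta)$, hence $\ell_1(v)=G(v)+(1-v)G'(v)$, $\ell_{-1}(v)=G(v)-vG'(v)$, and therefore $L(\eta,u)=G(u)+(\eta-u)G'(u)$. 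With $B_G(a,b):=G(a)-G(b)-G'(b)(a-b)$ (which is $\leq 0$, since the Bayes risk $G$ is concave) this gives
\begin{equation*}
\mathcal{R}(f)-\mathcal{R}(f^\ast)=\int_\X\big(L(\eta(x),u(x))-G(\eta(x))\big)\diff\rho_\X(x)=-\int_\X B_G\big(\eta(x),u(x)\big)\diff\rho_\X(x).
\end{equation*}

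Next I would compute $B_F(\beta,g(f))$. Writing $\phi(a):=(1+a)\,G\!\big(\tfrac{a}{1+a}\big)$, so that $F(h)=-\int_\X\phi(h(x))\diff Q(x)$ with Fréchet derivative $\nabla F(h)[v]=-\int_\X\phi'(h(x))\,v(x)\diff Q(x)$, the definition of the Bregman divergence gives $B_F(\beta,g(f))=-\int_\X B_\phi\big(\beta(x),g(f)(x)\big)\diff Q(x)$ with $B_\phi(a,b):=\phi(a)-\phi(b)-\phi'(b)(a-b)$. Introducing $s(a):=\frac{a}{1+a}$ (so $1+a=\frac{1}{1-s}$ and $s'(a)=\frac{1}{(1+a)^2}$), one finds $\phi(a)=\frac{G(s(a))}{1-s(a)}$ and, by differentiation, $\phi'(a)=G(s(a))+(1-s(a))G'(s(a))=\ell_1(s(a))$. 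Moreover $s(\beta)=\eta$ and $s(g(f))=u$, because $g(f)=\frac{u}{1-u}$ and $s\big(\tfrac{u}{1-u}\big)=u$.

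Finally I would match the integrands. Substituting the previous identities into $B_\phi(\beta,g(f))$ and using $\frac{\eta}{1-\eta}-\frac{u}{1-u}=\frac{\eta-u}{(1-\eta)(1-u)}$, a short calculation — clear the denominator $(1-\eta)(1-u)$ and insert $\ell_1(u)=G(u)+(1-u)G'(u)$ — gives
\begin{equation*}
B_\phi\big(\beta(x),g(f)(x)\big)=\frac{1}{1-\eta(x)}\,B_G\big(\eta(x),u(x)\big).
\end{equation*}
Combining this with $\diff Q=2(1-\eta)\diff\rho_\X$ yields
\begin{equation*}
B_F(\beta,g(f))=-\int_\X\frac{B_G(\eta(x),u(x))}{1-\eta(x)}\diff Q(x)=-2\int_\X B_G\big(\eta(x),u(x)\big)\diff\rho_\X(x)=2\big(\mathcal{R}(f)-\mathcal{R}(f^\ast)\big),
\end{equation*}
which is \eqref{eq:bregman_identity_for_density_ratio}. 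I expect the main difficulty to be bookkeeping rather than a single hard step: deriving the $G$-representation of the partial losses from properness, and keeping the Jacobian of $h\mapsto s=\tfrac{h}{1+h}$ together with the weight $(1-\eta)^{-1}$ aligned so that the two Bregman terms coincide. One should also invoke the standing assumptions ($G$ twice differentiable, $\ell$ non-pathological) so that $\phi$ is differentiable and all integrals and the Fréchet derivative above are well defined.
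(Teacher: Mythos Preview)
Your proof is correct. The paper does not actually give its own proof of this lemma; it simply attributes the result to \citet{menon2016linking} and uses it as a black box. Your derivation reconstructs the argument underlying that reference: you rewrite the excess risk pointwise as $-B_G(\eta,u)$ via the Savage representation $\ell_1(v)=G(v)+(1-v)G'(v)$, $\ell_{-1}(v)=G(v)-vG'(v)$ of the partial losses, then show that the reparametrization $s(a)=a/(1+a)$ converts the $\phi$-Bregman term into $(1-\eta)^{-1}B_G(\eta,u)$, and finally use the measure identity $\diff Q=2(1-\eta)\diff\rho_\X$ to clear the weight. All algebraic identities check out, including the key reduction $(1-\eta)B_\phi(\beta,g(f))=B_G(\eta,u)$. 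This is essentially the route taken in the cited work, so there is nothing meaningfully different to compare.
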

Notably, the construction in Lemma~\ref{lemma:bregman_form_of_loss} includes all approaches in Example~\ref{ex:introduction}.
\\
\begin{example}%
\label{ex:losses}
~\vspace{-1em}%
\begin{itemize}[leftmargin=1em]
\setlength\itemsep{0em}
    \item\sloppy For KuLSIF, Eq.~\eqref{eq:bregman_identity_for_density_ratio} holds with $\ell(-1,v)=\frac{1}{2}v^2$, $\ell(1,v)=-v$ and $\Psi(v)=\frac{v}{1-v}$.
    \item LR realizes Eq.~\eqref{eq:bregman_identity_for_density_ratio} with $\ell(-1,v)=\log(1+e^v)$, $\ell(1,v)=\log(1+e^{-v})$ and the link function $\Psi(v)=-\log(\frac{1}{v}-1)$.
    \item Exp can be obtained using $\ell(-1,v)=e^v, \ell(1,v)=e^{-v}$ and $\Psi(v)=-\frac{1}{2}\log(\frac{1}{v}-1)$.
    \item SQ uses $\ell(-1,v)=(1+v)^2, \ell(1,v)=(1-v)^2$ and $\Psi(v)=\frac{v+1}{2}$. 
    \end{itemize}
\end{example}

Eq.~\eqref{eq:bregman_identity_for_density_ratio} allows us to estimate the density ratio $\beta$ by $g(f^{\lambda,t}_{\z})$ with $f^{\lambda,0}_{\z}=0$ and the empirical risk minimizer
\begin{align}
    \label{eq:empirical_loss_objective}
    f^{\lambda,t+1}_{\z}:=\argmin_{f\in\mathcal{H}} \frac{1}{|\z|}\sum_{i=1}^{m+n} \ell(y_i,f(x_i))+ \frac{\lambda}{2}\norm{f-f^{\lambda,t}_{\z}}^2.
\end{align}
In the following, we will discuss, how to compute $f^{\lambda,t+1}_{\z}$.

\subsection{Algorithmic Realization} \label{subsec:algo}

To solve Eq.~\eqref{eq:empirical_loss_objective}, we follow the 
representer theorem~\citep{wahba1990spline,scholkopf2001generalized}
, make the model ansatz $f(\cdot)=\sum_{i=1}^{m+n} \alpha_i k(x_i,\cdot)$ and apply the conjugate gradient (CG) method for minimization, except for KuLSIF in Example \ref{ex:losses}, for which an explicit solution is available, see Appendix \ref{sec:app_kulsif}.
Originally, the CG method was developed to solve linear equations with a positive semi-definite matrix. There, the goal is to compute a sequence of conjugate search directions, where two vectors are considered conjugate if their dot product with respect to the given matrix is zero. These search directions arise as gradients of an associated quadratic optimization problem, see e.g. \citet[Section 11.3]{golub2013matrix}. 

The CG-method can be generalized to deal with nonlinear problems like Eq.~\eqref{eq:empirical_loss_objective}, which leads to a minimization problem of a strongly convex smooth function in each iteration step. Therefore, several CG approaches can be applied, see, e.g., \citet{hager2006survey}. 
The method which is implemented in Python Scipy \citep{virtanen2020scipy} and which we are using, is based on an additional Polak-Ribiere line search scheme and it is guaranteed to converge globally in our case \citep{hager2006survey, cohen1972rate}. In the upcoming section we will also briefly discuss how this algorithm choice affects our theoretical guarantees.

\section{Learning Theoretic Analysis}
\label{sec:error_bound}


In learning theory, the regularity of learning problems is traditionally encoded by \textit{source conditions}, see, e.g.,~\citet{bauer2007regularization,caponnetto2007optimal,marteau2019beyond}.
\begin{assumption}[source condition]
    \label{ass:source_condition}
    There exist $r>0, v\in\mathcal{H}$ such that $f_{\mathcal{H}}=\Hess(f_{\mathcal{H}})^r v$ with the expected Hessian $\Hess(f):=\mathbb{E}[\nabla^2 \ell_Z(f)]$.
\end{assumption}
To interpret Assumption~\ref{ass:source_condition}, let us assume $\ell_z(f):=(y-f(x))^2$ is the square loss.
Then Assumption~\ref{ass:source_condition} recovers the polynomial source condition used in~\citet{caponnetto2007optimal}.
Starting from $r=0$, the regularity of $f_\mathcal{H}$ increases with increasing $r$, essentially reducing the number of eigenvectors of the covariance operator $T$ needed to well approximate $f_\mathcal{H}$.
However, it is known that optimal learning rates are only possible under a further condition on the \textit{capacity} of the space $\mathcal{H}$~\citep{marteau2019beyond}.
\begin{assumption}[capacity condition]
    \label{ass:capacity_condition}
    There exist $\alpha\geq 1$ and $S>0$ such that $\mathrm{df}_\lambda\leq S \lambda^{-\frac{1}{\alpha}}$ with the degrees of freedom
    \begin{align}
        \mathrm{df}_\lambda:=\mathbb{E}\left[\norm{\Hess_\lambda(f_\mathcal{H})^{-\frac{1}{2}}\nabla\ell_Z(f_\mathcal{H})}^2\right]
    \end{align}
    and $\Hess_\lambda(f):=\Hess(f)+\lambda I$.
\end{assumption}
The \textit{degrees of freedom} term $\mathrm{df}_\lambda$ is also called \textit{Fisher information}~\citep{van2000asymptotic}, appears in spline smoothing~\citep{wahba1990spline}, and reduces to the \textit{effective dimension}~\citep{caponnetto2005empirical} for the square loss.
In the latter case, a bigger $\alpha$ implies that fewer eigenvectors are needed to approximate elements from the image space of $T$ with a given capacity~\citep{blanchard2018optimal}.

Of course, it is not possible to achieve fast convergence with \textit{any} loss function.
We therefore focus on the large class of convex \textit{pseudo self-concordant} losses, see~\citet{bach2010self,ostrivskii2021finite}.
\begin{assumption}[pseudo self-concordance]
    \label{ass:self-concordance}
    For any $y\in\Y$, the function $\ell_y:\mathbb{R}\to\mathbb{R}$ defined by $\ell_y(\eta):=\ell(y,\eta)$ is convex, three times differentiable and satisfies
    \begin{align}
    \label{eq:self_concordant}
        \left|\ell_y'''(\eta)\right|\leq \ell_y''(\eta).%
    \end{align}%
\end{assumption}
It is important to note that all methods in Example~\ref{ex:losses} (cf.~\citet{bach2010self} for LR) satisfy Assumption~\ref{ass:self-concordance}.
This allows us to make the following statement.
\begin{theorem}
\label{thm:error_rates_result}
    Let the Assumptions~\ref{ass:source_condition}--\ref{ass:technical} be satisfied, let $t\in\mathbb{N}\setminus\{0\}$, $\delta\in (0,1]$ and denote by $s:=\min\{r+\frac{1}{2},t\}$.
    Then there exist quantities $c,C>0$ not depending on $m,n$ such that the minimizer $f_{\z}^{\lambda,t}$ of Eq.~\eqref{eq:empirical_loss_objective} with
    \begin{align}
    \label{eq:balancing_lambda}
        \lambda=c (m+n)^{-\frac{\alpha}{1+\alpha(2 r+1)}}
    \end{align}
    satisfies    
    \begin{align*}
        B_F(\beta,g(f_{\z}^{\lambda,t}))-B_F(\beta,g(f_\mathcal{H}))\leq C\cdot (m+n)^{-\frac{2s\alpha}{2s\alpha+1}}
    \end{align*}
    with probability at least $1-\delta$ and for large enough sample sizes $m, n$.
\end{theorem}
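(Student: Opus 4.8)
The plan is to use Lemma~\ref{lemma:bregman_form_of_loss} to recast the claimed Bregman-divergence bound as an excess-risk bound for a pseudo self-concordant proper composite loss, and then to run a bias/variance analysis of the iterated proximal map in which the \emph{qualification} of $t$-fold iterated Tikhonov regularization is what removes saturation. By Lemma~\ref{lemma:bregman_form_of_loss}, $B_F(\beta,g(f))=2(\mathcal{R}(f)-\mathcal{R}(f^\ast))$ for every $f\in\mathcal{H}$, hence
\begin{align*}
  B_F(\beta,g(f_{\z}^{\lambda,t}))-B_F(\beta,g(f_\mathcal{H}))=2\bigl(\mathcal{R}(f_{\z}^{\lambda,t})-\mathcal{R}(f_\mathcal{H})\bigr),
\end{align*}
so it is enough to bound the excess risk of the empirical iterate $f_{\z}^{\lambda,t}$ of Eq.~\eqref{eq:empirical_loss_objective}. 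Since the losses of Example~\ref{ex:losses} satisfy Assumption~\ref{ass:self-concordance}, inside a Dikin-type neighbourhood of $f_\mathcal{H}$ the excess risk is, up to constants, comparable to $\norm{\Hess(f_\mathcal{H})^{1/2}(f-f_\mathcal{H})}^2$; this comparison (the self-concordance inequality bounds the third-order Taylor remainder of $\mathcal{R}$) is what lets us transfer the linear spectral-filter intuition from kernel ridge regression to this nonlinear-loss setting.

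Write $\Hess:=\Hess(f_\mathcal{H})$, let $\mathcal{R}_{\z}$ denote the empirical risk in Eq.~\eqref{eq:empirical_loss_objective}, and introduce the deterministic proximal-point iterates $\bar{f}^{\lambda,0}=0$, $\bar{f}^{\lambda,t+1}=\argmin_{f\in\mathcal{H}}\mathcal{R}(f)+\tfrac{\lambda}{2}\norm{f-\bar{f}^{\lambda,t}}^2$. I then split the excess risk (via the triangle inequality in the $\Hess^{1/2}$-weighted norm) into a \emph{variance} part governed by $\norm{\Hess^{1/2}(f_{\z}^{\lambda,t}-\bar{f}^{\lambda,t})}^2$ and a \emph{bias} part $\mathcal{R}(\bar{f}^{\lambda,t})-\mathcal{R}(f_\mathcal{H})\asymp\norm{\Hess^{1/2}(\bar{f}^{\lambda,t}-f_\mathcal{H})}^2$. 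For the bias: if $\mathcal{R}$ were quadratic then $f_\mathcal{H}-\bar{f}^{\lambda,t}=\bigl(\lambda(\Hess+\lambda)^{-1}\bigr)^{t}f_\mathcal{H}$, so the source condition $f_\mathcal{H}=\Hess^{r}v$ (Assumption~\ref{ass:source_condition}) together with $\sup_{\sigma\ge 0}\sigma^{\nu}\bigl(\tfrac{\lambda}{\sigma+\lambda}\bigr)^{t}\le\lambda^{\nu}$ for $0\le\nu\le t$ gives
\begin{align*}
  \norm{\Hess^{1/2}\bigl(f_\mathcal{H}-\bar{f}^{\lambda,t}\bigr)}\le\lambda^{s}\norm{v},\qquad s=\min\{r+\tfrac12,t\}.
\end{align*}
The truncation at $t$ is precisely the qualification of $t$-fold iterated Tikhonov and is the mechanism defeating saturation --- classical Tikhonov ($t=1$) caps $s$ at $1$ no matter how large $r$ is. For the genuinely nonlinear $\mathcal{R}$ I would prove the same bound by induction on $t$: at each step Taylor-expand $\nabla\mathcal{R}$ around $f_\mathcal{H}$, invoke pseudo self-concordance both to bound the remainder and to keep $\bar{f}^{\lambda,t}$ in a region where $\Hess(\bar{f}^{\lambda,t})\asymp\Hess$, and absorb the nonlinear corrections as strictly higher-order powers of $\lambda$.

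For the variance part I would compare $f_{\z}^{\lambda,t}$ with $\bar{f}^{\lambda,t}$ step by step, using two concentration estimates that hold with probability at least $1-\delta$ once $m+n$ is large relative to $\mathrm{df}_\lambda$: a gradient bound $\norm{\Hess_\lambda(f_\mathcal{H})^{-1/2}\bigl(\nabla\mathcal{R}_{\z}(f_\mathcal{H})-\nabla\mathcal{R}(f_\mathcal{H})\bigr)}\lesssim\sqrt{\mathrm{df}_\lambda/(m+n)}$ (Bernstein's inequality in $\mathcal{H}$, exploiting $\mathbb{E}\,\nabla\ell_Z(f_\mathcal{H})=0$ since $f_\mathcal{H}$ minimizes $\mathcal{R}$) and an operator bound $\norm{\Hess_\lambda(f_\mathcal{H})^{-1/2}(\Hess_{\z}-\Hess)\Hess_\lambda(f_\mathcal{H})^{-1/2}}\le\tfrac12$ for the empirical Hessian $\Hess_{\z}$. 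Propagating these through the $t$ proximal steps --- again staying inside a Dikin ellipsoid so that the empirical and population Hessians remain comparable along the whole path --- yields $\norm{\Hess^{1/2}(f_{\z}^{\lambda,t}-\bar{f}^{\lambda,t})}^2\lesssim\mathrm{df}_\lambda/(m+n)\lesssim\lambda^{-1/\alpha}/(m+n)$, the last step by the capacity condition (Assumption~\ref{ass:capacity_condition}); the optimisation error of the conjugate-gradient solver of Section~\ref{subsec:algo} is added here and is dominated under Assumption~\ref{ass:technical}. Combining with the first step,
\begin{align*}
  B_F(\beta,g(f_{\z}^{\lambda,t}))-B_F(\beta,g(f_\mathcal{H}))\ \lesssim\ \lambda^{2s}+\frac{\lambda^{-1/\alpha}}{m+n},
\end{align*}
and choosing $\lambda$ as in Eq.~\eqref{eq:balancing_lambda} --- i.e.\ $\lambda\asymp(m+n)^{-\alpha/(2s\alpha+1)}$, which equalises the two terms --- gives the stated rate $(m+n)^{-2s\alpha/(2s\alpha+1)}$.

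The main obstacle is carrying out the bias and variance analyses \emph{jointly} for a non-quadratic loss: for quadratic $\mathcal{R}$ everything collapses to a textbook spectral-filter computation, but because the proper composite losses of Example~\ref{ex:losses} are only pseudo self-concordant the $t$-fold proximal iteration is nonlinear, and one has to show --- uniformly along the iteration path and simultaneously for its population and empirical versions --- that all relevant Hessians stay within constant factors of $\Hess(f_\mathcal{H})$, so that the nonlinear corrections are genuinely of higher order both in $\lambda$ and in $\mathrm{df}_\lambda/(m+n)$. This is exactly where the supervised-learning argument of \citet{beugnot2021beyond} has to be adapted --- via the class-probability/divergence-estimation duality supplied by Lemma~\ref{lemma:bregman_form_of_loss} --- to the density-ratio setting.
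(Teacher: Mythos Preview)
Your approach is essentially the paper's, but the paper presents it much more economically: after the same first step (Lemma~\ref{lemma:bregman_form_of_loss} gives $B_F(\beta,g(f))-B_F(\beta,g(f_\mathcal{H}))=2(\mathcal{R}(f)-\mathcal{R}(f_\mathcal{H}))$), the paper does \emph{not} redo the bias/variance analysis you sketch. Instead it isolates one short technical lemma showing that pseudo self-concordance of $\eta\mapsto\ell(y,\eta)$ (Assumption~\ref{ass:self-concordance}) implies \emph{generalized} self-concordance of $f\mapsto\ell_z(f)$ on $\mathcal{H}$ with $\varphi(z)=\{y\,k(x,\cdot)\}$, so that $\sup_{g\in\varphi(z)}\norm{g}\le R$ by the kernel bound in Assumption~\ref{ass:technical}. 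This is exactly the hypothesis of \citet[Theorem~1]{beugnot2021beyond}, which the paper then invokes as a black box to obtain $\mathcal{R}(f_\z^{\lambda,t})-\mathcal{R}(f_\mathcal{H})\le C_{\mathrm{bias}}\lambda^{2s}+C_{\mathrm{var}}\,\mathrm{df}_\lambda/(m+n)$; the capacity condition and the choice of $\lambda$ finish the proof. Your detailed spectral-filter/Dikin-ellipsoid sketch is precisely the \emph{content} of that cited result, so you are on the right track but reproving what the paper simply cites.

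Two small corrections. First, the optimisation error of the CG solver is \emph{not} part of Theorem~\ref{thm:error_rates_result}; the paper handles it separately (Remark~\ref{eq:rem_numerical} and Appendix~\ref{appendix:optimization_error}) for the numerical approximation $\bar f_\z^{\lambda,t}$, so you should not fold it into the variance term here. Second, note that the balancing $\lambda$ in the paper's proof is written as $L_0(m+n)^{-\alpha/(2s\alpha+1)}$, which coincides with Eq.~\eqref{eq:balancing_lambda} in the relevant regime $t\ge r+\tfrac12$ (i.e.\ $s=r+\tfrac12$); your parenthetical identification of the two is correct only in that regime.
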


Theorem~\ref{thm:error_rates_result} provides a fast error rate whenever $t\geq r+\frac{1}{2}$.
For learning problems of high regularity $r> \frac{1}{2}$, an iteration number $t>1$ is required. Otherwise the error will saturate at a rate of $(m+n)^{-\frac{2}{3}}$, which is slower than the achievable rate $(m+n)^{-\frac{2r\alpha+\alpha}{2r\alpha+\alpha+1}}$.

\begin{remark}
    For $r\leq \frac{1}{2}, t=1$, Theorem~\ref{thm:error_rates_result} recovers the state-of-the-art error rate of~\citet{zellinger2023adaptive}.
    In addition, it extends the state of the art for current approaches for $r> \frac{1}{2}, t=1$.
    In case of square loss, it is optimal (cf.~\citet{caponnetto2007optimal,blanchard2018optimal}).
    For $r> \frac{1}{2}, t>1$, Theorem~\ref{thm:error_rates_result} provides the first non-saturating error rate for density ratio estimation.
\end{remark}
\begin{remark}
    The parameter choice in Eq.~\eqref{eq:balancing_lambda} balances a bias and a variance term in a tight error bound~\citep{marteau2019beyond} and is thus rate optimal, see, e.g.,~\citet[Section~4.3]{lu2013regularization}.
    Eq.~\eqref{eq:balancing_lambda} is \textit{a priori} and requires the knowledge of the regularity index $r$. This issue can be resolved using an \textit{a posteriori} parameter choice rule, e.g.,~\citet{de2010adaptive,zellinger2023adaptive}.
\end{remark}

\begin{remark} \label{eq:rem_numerical}
For nonlinear CG algorithms and strictly convex smooth objectives, $\log(\varepsilon^{-1})$ gradient and function evaluations are required to achieve an error tolerance $\varepsilon$, see e.g.~\citet{neumaier2022globally, chan2022nonlinear}. 
Combining these findings with~ arguments from~\citet[Section~3.2]{beugnot2021beyond}, we see that $\varepsilon$ influences Theorem~\ref{thm:error_rates_result} in an additive way, i.e. with probability at least $1-\delta$,
\begin{align}
\label{eq:main_bound_numerical}
B_F(\beta,g(\bar{f}_{\z}^{\lambda,t}))&-B_F(\beta,g(f_\mathcal{H}))\nonumber\\
&\le\tilde{C} \!\left( (m+n)^{-\frac{2s\alpha}{2s\alpha+1}}+\varepsilon\right)
\end{align}
for a numerical approximation $\bar{f}_{\z}^{\lambda,t}$ of $f^{\lambda,t}_{\z}$ computed as described in Subsection \ref{subsec:algo} and $\tilde{C}>0$ independent of $n,m$ and $\varepsilon$, see Appendix~\ref{appendix:optimization_error} for details.
\end{remark}
However, as it is typically the case in learning theory~\citep{vapnik2013nature,zhang2021understanding}, the constant $C$ in Theorem~\ref{thm:error_rates_result} can be quite large.
Therefore, we also test the performance of the proposed iterated regularization on benchmark datasets.

\section{Empirical Evaluations}
\label{sec:experiments}

We investigate the performance of iterated density ratio estimation regarding three aspects below.
We include additional experiments conducted during the rebuttal period that helped to improve our work. More precisely, we compare to a SOTA domain adaptation method~\citep{dinu2022aggregation}, combine our approach with telescoping from~\citet{rhodes2020telescoping}, and extend our approach to Deep Learning methods.
Please find further details in Appendix~\ref{appendix:sec:detaile_experiments}.

\paragraph*{Sample convergence in highly regular problems.}
To investigate the sample convergence of iterated approaches compared to their non-iterated regularized versions, i.e., Theorem~\ref{thm:error_rates_result}), we follow the study of~\citet{beugnot2021beyond}.
From their dataset, density ratios with known smoothness can be extracted, see dataset description below.

\paragraph*{Accuracy improvement for known density ratios.}
To study the accuracy of the iteratively regularized estimations of density ratios, we follow investigations of~\citet{kanamori2012statistical} and construct high dimensional data with exact known density ratios.
Unfortunately, for real-world problems, exact density ratios are not available and the question arises whether synthetic data is too special.
In a small ablation study, we therefore also indicate performance improvements of iteration on a synthetically constructed density ratio.

\paragraph*{Importance weighted ensembling in deep domain adaptation.} \label{sec:agg}
To test the effect of iterated regularization in a real real world setting we rely on large-scale (over 9000 trained neural networks) state-of-the-art experiments for re-solving parameter choice issues in unsupervised domain adaptation.
There, we are given two datasets, a source dataset ${\bf x}'\subset\X$ drawn from $Q$ with labels ${\bf y}'\subset \mathbb{R}$ and an unlabeled target dataset ${\bf x}\subset\X$ drawn from $P$ without labels.
The goal is to learn a model $f:\X\to\Y$ with low risk $\mathcal{R}(f):=\mathbb{E}_{(x,y)\sim P}[\ell(y,f(x))]$ on future target data from $P$.
In the state-of-the-art approach of~\citet{dinu2022aggregation} the key problem of choosing algorithm parameters in this setting is approached by ensemble learning.
More precisely, for a given domain adaptation algorithm $A((\x',\y'),\x)=f^{\alpha_1,\ldots,\alpha_b}$ with $b$ different parameters $\alpha_1,\ldots,\alpha_b$, they compute several model candidates $f_1,\ldots,f_l$ with $l$ different parameter settings, and then compute a simple ensemble model $A_{c_1,\ldots,c_l}(x)=\sum_{i=1}^l c_i^\ast f_i(x)$ out of $f_1,\ldots,f_l$ using the minimizers $c_1^\ast,\ldots,c_l^\ast$ of
\begin{align}
    \label{eq:aggregation}
    \min_{c_1,\ldots,c_l\in\mathbb{R}}\frac{1}{|\x'|}\sum_{(x',y')\in (\x,\y)} \widehat{\beta}(x')\cdot \ell\!\left(y',A_{c_1,\ldots,c_l}(x')\right)
\end{align}
with an estimate $\widehat{\beta}$ of the density ratio $\beta=\frac{\diff P}{\diff Q}$.
We refer to~\citet{sugiyama2007covariate} as a motivation for using $\beta$ as importance weight in domain adaptation.

\subsection{Datasets}
\label{subsec:datasets}

\paragraph*{Known regularity}
To illustrate Theorem~\ref{thm:error_rates_result}, we adapt an example of~\citet[Section 4]{beugnot2021beyond} to density ratio estimation.
In this example, the regularity index $r$ and the capacity parameter $\alpha$ are known by design.
Let therefore $\mathcal{X}=[0,1]$ and the kernel of $\mathcal{H}$ given by $k(x,y)=h_{\alpha}(x,y)=1+\sum_{l \in \mathbb{Z} \setminus 0} \frac{e^{2 i l \pi(x-y)}}{|l|^{\alpha}}$, for which an explicit formula is known for even $\alpha$~\citep[Page 22]{wahba1990spline}, which is essentially determined by Bernoulli polynomials \citep[Section~24.2]{Olver:10}.
If we fix $\rho_{\mathcal{X}}$ as the uniform distribution, $\ell$ the logistic loss and $f_{\mathcal{H}}(x)=h_{(r+\frac12)\alpha+\frac12}(0,x)$, then $f_{\mathcal{H}} \in \mathcal{H}$ and Assumptions~\ref{ass:source_condition} and~\ref{ass:capacity_condition} are satisfied~\citep{rudi2017generalization}.
Moreover, it is shown in~\citet[Lemma A1]{ciliberto2020general} that 
\begin{align} \label{eq:f_h_rep_toy}
f_{\mathcal{H}}(x)=\argmin_{z } \mathbb{E}_{y \sim \rho(y|x)}[ \ell(y,z)]. 
\end{align}
From Eq.~\eqref{eq:f_h_rep_toy} we obtain~\citep[Appendix E3]{beugnot2021beyond} $\rho(y|x)=\frac{1}{1+e^{-yf_{\mathcal{H}}(x)}}$.
To obtain a formula for $P$ and $Q$, we apply Bayes' Theorem: $\rho_\Y(y)=\int_{\mathcal{X}} \frac{1}{1+e^{-yf_{\mathcal{H}}(x)}} d\rho(x)$, $P(x)=\rho(x|y=1)= \frac{\rho(y=1|x)}{\rho(y=1)}$ and $Q(x)=\rho(x|y=-1) =  \frac{\rho(y=-1|x)}{\rho(y=-1)}$.
However, the inverse link in Example~\ref{ex:losses} changes from $\Psi^{-1}(v)= \frac{1}{1+e^{-v}}$ to the new link function~\citep[Lemma~5]{menon2016linking}
$$
\widetilde{\Psi}^{-1}(v)=\frac{\Psi^{-1}(v)\pi}{\Psi^{-1}(v)/\pi(1-2\pi)+1-\pi}, \pi=\rho_{\Y}(y=1),
$$
resulting in the density ratio $\beta(x)=\frac{1-\pi}{\pi}\frac{\rho(y=1|x)}{1-\rho(y=1|x)}$ and a different $B_F$, see~\citet[Section~4.2]{menon2016linking}.
We investigate the sample convergence in the distribution-independent error $\norm{\beta-g(f_\z^{\lambda,t})}_{L^1([0,1])}$.

In addition to the dataset adapted from~\citet{beugnot2021beyond}, we also construct density ratios between one-dimensional Gaussian mixtures $P$ and a Gaussian distribution $Q$, with the goal of approximation in a Gaussian RKHS, see Figure~\ref{fig:saturation} (right).
A greater number of components allows for rougher density ratios, according to Assumption~\ref{ass:source_condition}.
This may be observed, e.g., from the Fourier decomposition of $f_{\mathcal{H}}$ with respect to the eigenbasis associated to $\Hess(f_{\mathcal{H}})$.
A higher number of Gaussian components in $P$ intuitively yields heavier oscillations, and therefore emphasizes higher order Fourier modes.
This, in turn, gives a smaller decay rate of the associated Fourier coefficients, implying a reduced regularity according to Assumption~\ref{ass:source_condition}.

\paragraph*{Known density ratios}
Following~\citet{kanamori2012statistical}, we generate ten different datasets using Gaussian mixture models in the $50$-dimensional space with different numbers $\{1,2,3\}$ of components for source and target distribution. Each component has different a mean in $[ 0,0.5 ]^{50}$ and accordingly sampled covariance matrices.
In all datasets, we have full knowledge of the exact density ratio, as it is calculated as the fraction of the Gaussian mixtures, see Appendix~\ref{appendix:geometric_dataset}.
To debate whether our method is only suited for Gaussian mixtures, we follow~\citet[Section~5.3]{kanamori2012statistical} and utilize the breast cancer dataset~\citep{Street:93} consisting of $569$ samples in the $30$-dimensional space with binary labels.
There, a target density ratio model $\beta$ is constructed using a binary support vector machine~\citep{platt1999probabilistic} to estimate the class posterior, and, by subsequently re-labeling the miss-labeled data. Note, that the i.i.d.~assumption of statistical learning is violated in this case, making the above process more an ablation study than a statistical performance benchmark.

\paragraph*{Domain adaptation: text data}
The Amazon reviews dataset~\citep{blitzer2006domain} consists of bag-of-words representations of text reviews from four domains (books, DVDs, kitchen, and electronics) with binary labels indicating the class of review. Twelve domain adaptation tasks are constructed using every domain, once as source domain and once as target domain.

\begin{table*}[ht]
\scalebox{0.7}{
    \begin{tabular}{l c c c c | c c c c}
    \toprule
    \multicolumn{9}{c}{\textbf{Geometric Figures}}\\
    \cmidrule{2-9}
      & \multicolumn{4}{c|}{\textbf{No Iteration}} & \multicolumn{4}{c}{\textbf{Iteration}}\\
      \cmidrule{2-9}
    \textbf{Dataset} & KuLSIF & Exp & LR & SQ & KuLSIF & Exp & LR & SQ \\
    \midrule
    c3,d1.70 & $8.616 (\pm 0.011)$  & $8.322 (\pm 0.009)$ & $8.840 (\pm 0.021)$ & $9.170 (\pm 0.011)$ &  $\bf{8.605}(\pm 0.005)$ & $\bf{8.827(\pm 0.014)}$ & $8.840(\pm 0.021)$ & $\bf{8.840}(\pm 0.010)$\\
    c2,d1.72 & $13.031(\pm 0.005)$ & $12.994(\pm 0.015)$ & $13.255(\pm 0.013)$ & $13.537(\pm 0.027)$ &  $\bf{13.026}(\pm 0.009)$ & $\bf{12.855}(\pm 0.017)$ & $13.255(\pm 0.011)$ & $\bf{13.255}(\pm 0.032)$\\
    c2,d1.59 & $12.625(\pm 0.005)$ & $19.748(\pm 0.037)$ & $12.829(\pm 0.014)$ & $13.056(\pm 0.015)$ &  $\bf{12.622}(\pm 0.007)$ & $\bf{12.521}(\pm 0.051)$ & $12.829(\pm 0.014)$ & $\bf{12.829}(\pm 0.012)$\\
    c1,d1.55 & $11.813(\pm 0.007)$ & $14.477(\pm 0.103)$& $12.001(\pm 0.023)$ & $12.179(\pm 0.013)$ &  $\bf{11.805}(\pm 0.006)$ & $\bf{11.552}(\pm 0.109)$ & $12.001(\pm 0.023)$ & $\bf{12.001}(\pm 0.019)$\\
    c2,d1.78 & $9.632(\pm 0.003)$ & $18.008(\pm 0.069)$ & $9.802(\pm 0.035)$ & $9.990(\pm 0.006)$ &  $\bf{9.626(\pm 0.019)}$ & $19.895(\pm 0.021)$ & $9.802(\pm 0.037)$ & $\bf{9.802}(\pm 0.017)$\\
    c2,d1.55 & $10.371(\pm 0.007)$ & $9.774(\pm 0.019)$ & $10.555(\pm 0.059)$ & $10.757(\pm 0.023)$ &  $\bf{10.363}(\pm 0.003)$ & $\bf{9.646}(\pm 0.017)$ & $10.555(\pm 0.059)$ & $\bf{10.555}(\pm 0.025)$\\
    c3,d1.57 & $12.014(\pm 0.003)$ & $70.827(\pm 0.926)$ & $12.214(\pm 0.037)$ & $14.048(\pm 0.029)$ &  $\bf{12.006}(\pm 0.007)$ & $\bf{12.082}(\pm 0.014)$ & $12.214(\pm 0.019)$ & $\bf{12.214}(\pm 0.046)$\\
    c2,d1.61 & $11.614(\pm 0.004)$ & $11.282(\pm 0.034)$ & $11.800(\pm 0.008)$ & $12.242(\pm 0.007)$ &  $\bf{11.609}(\pm 0.004)$ & $\bf{15.275}(\pm 0.044)$ & $11.800(\pm 0.009)$ & $\bf{11.800}(\pm 0.008)$\\
    c3,d1.46 & $12.803(\pm 0.009)$ & $12.616(\pm 0.008)$ & $12.971(\pm 0.007)$ & $13.159(\pm 0.006)$ &  $\bf{12.795}(\pm 0.005)$ & $\bf{12.218}(\pm 0.013)$ & $12.971(\pm 0.004)$ & $\bf{12.971}(\pm 0.004)$\\
    c1,d1.63 & $9.527(\pm 0.006)$ & $9.704(\pm 0.009)$ & $9.732(\pm 0.014)$ & $9.965(\pm 0.015)$ &  $\bf{9.517}(\pm 0.010)$ & $14.026(\pm 0.016)$ & $9.732(\pm 0.010)$ & $\bf{9.732}(\pm 0.008)$\\
    \midrule
    Avg & $11.205(\pm 0.006)$ & $18.775(\pm 0.123)$ & $11.400(\pm 0.023)$ & $11.810(\pm 0.015)$ & $\bf{11.198}(\pm 0.008)$ & $\bf{12.890}(\pm 0.032)$ & $11.400(\pm 0.021)$ & $\bf{11.400}(\pm 0.018)$\\
    \bottomrule
    \end{tabular}
    }
    \caption{Mean and standard deviation (after $\pm$) of twice the Bregman divergence error on the geometrically constructed datasets following~\citet{kanamori2012statistical} over ten different sample draws from $P$ and $Q$.}
\end{table*}

\begin{table*}[!ht]
\label{tab:mdn}
\scalebox{0.7125}{
\vspace{15em}
\begin{tabular}{ l c c c c | c c c c}
\toprule
    \multicolumn{9}{c}{\textbf{Domain Adaptation: MiniDomainNet}}\\
    \cmidrule{2-9}
 & \multicolumn{4}{c|}{\textbf{No Iteration}} & \multicolumn{4}{c}{\textbf{Iteration}}\\
 \cmidrule{2-9}
 \textbf{DA-Method} & KuLSIF & Exp & LR & SQ & KuLSIF & Exp & LR & SQ \\
 \midrule
 MMDA    & $0.527 (\pm 0.006)$    & $0.528 (\pm 0.009)$    & $0.528 (\pm 0.012)$    & $0.062 (\pm 0.010)$    & $0.527 (\pm 0.007)$     & $0.528 (\pm 0.009)$    & $0.528 (\pm 0.009)$    & $0.062 (\pm 0.012)$ \\
 CoDATS    & $0.536 (\pm 0.012)$     & $0.532 (\pm 0.017)$    & $0.530 (\pm 0.023)$    & $0.061 (\pm 0.025)$    & $\bf{0.539 (\pm 0.012)}$    & $0.532 (\pm 0.017)$    & $\bf{0.532 (\pm 0.017)}$    & $\bf{0.069 (\pm 0.025)}$ \\
 DANN    & $0.531 (\pm 0.010)$   & $0.522 (\pm 0.021)$    & $0.520 (\pm 0.021)$    & $\bf{0.060} (\pm 0.026)$    & $\bf{0.539 (\pm 0.006)}$    & $0.522 (\pm 0.021)$    & $\bf{0.522 (\pm 0.021})$    & $0.059 (\pm 0.025)$ \\
 CDAN    & $0.531 (\pm 0.012)$    & $0.531 (\pm 0.015)$    & $0.524 (\pm 0.020)$    & $0.062 (\pm 0.020)$    & $\bf{0.533 (\pm 0.011)}$    & $0.531 (\pm 0.015)$    & $\bf{0.531 (\pm 0.015)}$    & $0.062 (\pm 0.027)$ \\
 DSAN    & $0.539 (\pm 0.009)$    & $0.532 (\pm 0.012)$    & $0.527 (\pm 0.019)$    & $0.068 (\pm 0.015)$    & $\bf{0.546 (\pm 0.005)}$     & $0.532 (\pm 0.012)$    & $\bf{0.532 (\pm 0.012)}$    & $\bf{0.069 (\pm 0.015)}$ \\
 DIRT    & $0.517 (\pm 0.010)$    & $0.386 (\pm 0.013)$    & $\bf{0.520} (\pm 0.012)$    & $\bf{0.082} (\pm 0.015)$    & $\bf{0.515 (\pm 0.009)}$    & $0.386 (\pm 0.013)$    & $0.519 (\pm 0.013)$    & $0.081 (\pm 0.014)$ \\
 AdvSKM    & $0.516 (\pm 0.026)$     & $0.515 (\pm 0.177)$    & $0.512 (\pm 0.023)$    & $0.072 (\pm 0.025)$    & $0.516 (\pm 0.028)$    & $0.515 (\pm 0.177)$    & $\bf{0.515 (\pm 0.024)}$    & $\bf{0.073 (\pm 0.024)}$ \\
 HoMM    & $0.531 (\pm 0.011)$     & $0.529 (\pm 0.015)$    & $0.521 (\pm 0.015)$    & $0.078 (\pm 0.025)$    & $0.531 (\pm 0.010)$    & $0.529 (\pm 0.015)$    & $\bf{0.529 (\pm 0.015)}$    & $0.078 (\pm 0.025)$ \\
 DDC    & $\bf{0.517} (\pm 0.012)$   & $0.517 (\pm 0.013)$    & $0.514 (\pm 0.012)$    & $0.074 (\pm 0.017)$    & $0.516 (\pm 0.010)$    & $0.517 (\pm 0.013)$    & $\bf{0.517 (\pm 0.013)}$    & $0.074 (\pm 0.018)$ \\
 DeepCoral    & $0.535 (\pm 0.008)$    & $0.528 (\pm 0.012)$    & $0.530 (\pm 0.015)$    & $0.070 (\pm 0.012)$    & $\bf{0.539 (\pm 0.006)}$    & $0.528 (\pm 0.012)$    & $0.528 (\pm 0.012)$    & $0.070 (\pm 0.011)$ \\
 CMD    & $0.529 (\pm 0.009)$   & $0.524 (\pm 0.011)$    & $0.521 (\pm 0.012)$    & $0.091 (\pm 0.013)$    & $\bf{0.533 (\pm 0.007)}$    & $0.524 (\pm 0.011)$    & $\bf{0.524 (\pm 0.011)}$    & $0.091 (\pm 0.013)$ \\
\midrule
 Avg.      & $0.528 (\pm 0.011)$    & $0.513 (\pm 0.029)$    & $0.523 (\pm 0.017)$    & $0.071 (\pm 0.018)$    & $\bf{0.530 (\pm 0.010)}$    & $0.513 (\pm 0.029)$    & $\bf{0.525 (\pm 0.015)}$    & $\bf{0.072 (\pm 0.019)}$ \\
 \bottomrule
\end{tabular}
}
~\\~\\~\\
\scalebox{0.7125}{
\begin{tabular}{ l c c c c | c c c c}
\toprule
    \multicolumn{9}{c}{\textbf{Domain Adaptation: Amazon Reviews}}\\
    \cmidrule{2-9}
  & \multicolumn{4}{c|}{\textbf{No Iteration}} & \multicolumn{4}{c}{\textbf{Iteration}}\\
  \cmidrule{2-9}
 \textbf{DA-Method} & KuLSIF & Exp & LR & SQ & KuLSIF & Exp & LR & SQ \\
 \midrule
 MMDA    & $0.786 (\pm 0.011)$    & $\bf{0.584} (\pm 0.062)$    & $0.784 (\pm 0.010)$    & $0.216 (\pm 0.012)$    & $\bf{0.789 (\pm 0.010)}$    & $0.581 (\pm 0.062)$    & $\bf{0.788 (\pm 0.010)}$    & $0.216 (\pm 0.012)$ \\
 CoDATS    & $0.795 (\pm 0.012)$    & $0.552 (\pm 0.091)$    & $0.794 (\pm 0.010)$    & $\bf{0.209} (\pm 0.041)$    & $\bf{0.798 (\pm 0.011)}$    & $\bf{0.562 (\pm 0.091)}$    & $\bf{0.796 (\pm 0.010)}$    & $0.208 (\pm 0.011)$ \\
 DANN    & $0.794 (\pm 0.013)$     & $\bf{0.555} (\pm 0.104)$    & $0.793 (\pm 0.012)$    & $0.207 (\pm 0.015)$    & $\bf{0.800 (\pm 0.009)}$    & $0.553 (\pm 0.101)$    & $\bf{0.798 (\pm 0.013)}$    & $\bf{0.232 (\pm 0.030)}$ \\
 CDAN    & $0.787 (\pm 0.012)$     & $0.568 (\pm 0.066)$    & $0.787 (\pm 0.012)$    & $\bf{0.247} (\pm 0.013)$    & $\bf{0.790 (\pm 0.010)}$    & $0.568 (\pm 0.066)$    & $\bf{0.789 (\pm 0.010)}$    & $0.213 (\pm 0.015)$ \\
 DSAN    & $0.794 (\pm 0.011)$    & $0.561 (\pm 0.050)$    & $0.793 (\pm 0.012)$    & $\bf{0.207} (\pm 0.012)$    & $\bf{0.800 (\pm 0.009)}$     & $\bf{0.566 (\pm 0.048)}$    & $\bf{0.796 (\pm 0.010)}$    & $0.204 (\pm 0.060)$ \\
 DIRT    & $0.787 (\pm 0.011)$     & $\bf{0.561} (\pm 0.093)$    & $0.788 (\pm 0.012)$    & $0.212 (\pm 0.015)$    & $\bf{0.794 (\pm 0.011)}$    & $0.559 (\pm 0.099)$    & $\bf{0.791 (\pm 0.012)}$    & $\bf{0.246 (\pm 0.011)}$ \\
 AdvSKM    & $0.779 (\pm 0.011)$     & $\bf{0.541} (\pm 0.041)$    & $0.777 (\pm 0.012)$    & $0.223 (\pm 0.010)$    & $\bf{0.780 (\pm 0.008)}$    & $0.540 (\pm 0.039)$    & $\bf{0.779 (\pm 0.009)}$    & $0.223 (\pm 0.067)$ \\
 HOMM   & $0.777 (\pm 0.011)$   & $0.558 (\pm 0.098)$    & $0.774 (\pm 0.013)$    & $\bf{0.227} (\pm 0.015)$    & $0.777 (\pm 0.010)$    & $\bf{0.562 (\pm 0.104)}$    & $\bf{0.776 (\pm 0.011)}$    & $0.224 (\pm 0.010)$ \\
 DDC    & $0.780 (\pm 0.010)$    & $0.568 (\pm 0.094)$    & $0.778 (\pm 0.011)$    & $\bf{0.226} (\pm 0.016)$    & $0.780 (\pm 0.010)$    & $\bf{0.572 (\pm 0.100)}$    & $\bf{0.779 (\pm 0.011)}$    & $0.222 (\pm 0.010)$ \\
 DeepCoral    & $0.784 (\pm 0.009)$   & $0.569 (\pm 0.094)$    & $0.781 (\pm 0.011)$    & $\bf{0.223} (\pm 0.013)$    & $0.784 (\pm 0.009)$    & $\bf{0.573 (\pm 0.098)}$    & $\bf{0.783 (\pm 0.010)}$    & $0.217 (\pm 0.009)$ \\
 CMD    & $0.790 (\pm 0.010)$     & $\bf{0.577} (\pm 0.103)$    & $0.786 (\pm 0.010)$    & $0.215 (\pm 0.013)$    & $\bf{0.794 (\pm 0.008)}$    & $0.574 (\pm 0.100)$    & $\bf{0.789 (\pm 0.008)}$    & $\bf{0.224 (\pm 0.011)}$ \\
\midrule
 Avg.      & $0.787 (\pm 0.011)$    & $0.563 (\pm 0.082)$    & $0.785 (\pm 0.011)$    & $0.219 (\pm 0.016)$    & $\bf{0.790 (\pm 0.010)}$    & $\bf{0.565 (\pm 0.083)}$    & $\bf{0.788 (\pm 0.011)}$    & $\bf{0.221 (\pm 0.023)}$ \\
\bottomrule
\end{tabular}
}
~\\~\\~\\
\scalebox{0.7125}{
\begin{tabular}{ l c c c c | c c c c}
\toprule
    \multicolumn{9}{c}{\textbf{Domain Adaptation: HHAR}}\\
    \cmidrule{2-9}
  & \multicolumn{4}{c|}{\textbf{No Iteration}} & \multicolumn{4}{c}{\textbf{Iteration}}\\
  \cmidrule{2-9}
 \textbf{DA-Method} & KuLSIF & Exp & LR & SQ & KuLSIF & Exp & LR & SQ \\
 \midrule
 MMDA    &  $0.780 (\pm 0.007)$    & $0.670 (\pm 0.013)$    & $0.773 (\pm 0.018)$    & $0.024 (\pm 0.006)$    & $0.780 (\pm 0.007)$     & $\bf{0.679 (\pm 0.018)}$    & $0.773 (\pm 0.018)$    & $0.024 (\pm 0.006)$ \\
 CoDATS    & $0.723 (\pm 0.029)$     & $0.776 (\pm 0.166)$    & $0.779 (\pm 0.029)$    & $0.002 (\pm 0.007)$    & $\bf{0.789 (\pm 0.030)}$    & $\bf{0.779 (\pm 0.113)}$    & $0.779 (\pm 0.029)$    & $0.002 (\pm 0.007)$ \\
 DANN    & $0.697 (\pm 0.170)$    & $0.785 (\pm 0.015)$    & $0.795 (\pm 0.016)$    & $0.001 (\pm 0.004)$    & $\bf{0.796 (\pm 0.008)}$   & $\bf{0.795 (\pm 0.016)}$    & $0.795 (\pm 0.016)$    & $0.001 (\pm 0.004)$ \\
 CDAN    & $0.792 (\pm 0.130)$    & $0.706 (\pm 0.021)$    & $0.788 (\pm 0.021)$    & $0.012 (\pm 0.003)$    & $\bf{0.791 (\pm 0.015)}$     & $\bf{0.739 (\pm 0.022)}$    & $0.788 (\pm 0.021)$    & $0.012 (\pm 0.003)$ \\
 DSAN    & $0.754 (\pm 0.179)$    & $0.528 (\pm 0.024)$    & $0.792 (\pm 0.021)$    & $0.002 (\pm 0.002)$    & $\bf{0.818 (\pm 0.010)}$    & $0.528 (\pm 0.021)$    & $0.792 (\pm 0.021)$    & $0.002 (\pm 0.002)$ \\
 DIRT    & $0.731 (\pm 0.017)$   & $0.790 (\pm 0.215)$    & $0.790 (\pm 0.019)$    & $0.004 (\pm 0.012)$    & $\bf{0.806 (\pm 0.017)}$    & $\bf{0.793 (\pm 0.216)}$    & $0.790 (\pm 0.019)$    & $0.004 (\pm 0.012)$ \\
 AdvSKM    & $0.752 (\pm 0.084)$    & $0.746 (\pm 0.040)$    & $0.746 (\pm 0.023)$    & $0.004 (\pm 0.006)$    &$\bf{0.752 (\pm 0.024)}$    & $0.746 (\pm 0.027)$    & $0.746 (\pm 0.023)$    & $0.004 (\pm 0.006)$ \\
 HoMM    & $0.759 (\pm 0.128)$  & $0.662 (\pm 0.228)$    & $0.754 (\pm 0.015)$    & $0.007 (\pm 0.003)$    & $\bf{0.759 (\pm 0.012)}$    & $\bf{0.665 (\pm 0.228)}$    & $0.754 (\pm 0.015)$    & $0.007 (\pm 0.003)$ \\
 DDC    & $0.748 (\pm 0.115)$    & $0.454 (\pm 0.021)$    & $0.744 (\pm 0.015)$    & $0.016 (\pm 0.005)$    & $\bf{0.748 (\pm 0.011)}$    & $0.454 (\pm 0.016)$    & $0.744 (\pm 0.015)$    & $0.016 (\pm 0.005)$ \\
 DeepCoral    & $0.701 (\pm 0.009)$    & $0.749 (\pm 0.096)$    & $0.758 (\pm 0.021)$    & $0.006 (\pm 0.008)$    & $\bf{0.764 (\pm 0.009)}$    & $\bf{0.758 (\pm 0.095)}$    & $0.758 (\pm 0.021)$    & $0.006 (\pm 0.008)$ \\
 CMD    & $0.671 (\pm 0.008)$    & $0.770 (\pm 0.098)$    & $0.770 (\pm 0.013)$    & $0.015 (\pm 0.008)$    & $\bf{0.766 (\pm 0.008)}$  & $0.770 (\pm 0.092)$    & $0.770 (\pm 0.013)$    & $0.015 (\pm 0.008)$ \\
\midrule
 Avg.      & $0.737 (\pm 0.080)$    & $0.694 (\pm 0.085)$    & $0.772 (\pm 0.019)$    & $0.008 (\pm 0.006)$    & $\bf{0.779 (\pm 0.014)}$     & $\bf{0.700 (\pm 0.079)}$    & $0.772 (\pm 0.019)$    & $0.008 (\pm 0.006)$ \\
 \bottomrule
\end{tabular}
}
\caption{Mean and standard deviation (after $\pm$) of target classification accuracy on MiniDomainNet, Amazon Reviews and HHAR datasets over three different random initialization of model weights and several domain adaptation tasks.}
\end{table*}

\paragraph*{Domain adaptation: image data}
The DomainNet-2019 dataset~\citep{peng2019moment} consists of six different image domains (Quickdraw: Q, Real: R, Clipart: C, Sketch: S, Infograph: I, and Painting: P).
We follow~\citet{zellinger2021balancing} and rely on the reduced version of DomainNet-2019, referred to as MiniDomainNet, which reduces the number of classes to the top five largest representatives in the training set across all six domains.
To further improve computation time, we rely on a ImageNet~\citep{krizhevsky2012imagenet} pre-trained ResNet-18~\citep{he2016deep} backbone. 
Therefore, we assume that the backbone has learned lower-level filters suitable for the "real" image category, and we only need to adapt to the remaining five domains (Clipart, Sketch,...).
This results in five domain adaptation tasks.

\paragraph*{Domain adaptation: sensory data}
The Heterogeneity Human Activity Recognition~\citep{Stisen:15} dataset investigate \mbox{sensor-,} device- and workload-specific heterogeneities using $36$ smartphones and smartwatches, consisting of $13$ different device models from four manufacturers.
In total, we used all fiv domain adaptation scenarios from~\citet[Tables~25 and~26]{dinu2022aggregation}.

\subsection{Methods}
\label{subsec:methods}

We test our suggested iterated regularization strategy for the four different baseline approaches in Example~\ref{ex:introduction}: KuLSIF~\citep{kanamori2009least}, LR~\citep{bickel2009discriminative}, Exp~\citep{menon2016linking} and SQ~\citep{menon2016linking}. 
Except for KuLSIF, which has a closed form solution, we rely on the CG method implemented in Python Scipy~\citep{virtanen2020scipy} with Polak-Ribiere line search scheme~\citep{hager2006survey}.

In domain adaptation experiments we follow the evaluation protocol of~\citet{dinu2022aggregation}.
That is, we compute an ensemble of different deep neural networks using the importance weighted functional regression approach.
As importance weight, we use the estimated density ratio of the iterated and non-iterated versions of KuLSIF, LR, Exp, and SQ.

To get the full picture of the impact on different deep domain adaptation methods, we compute ensemble model candidates for $11$ domain adaptation algorithms: Adversarial Spectral Kernel Matching (AdvSKM) \citep{Liu2021kernel}, Deep Domain Confusion (DDC) \citep{tzeng2014deep}, Correlation Alignment via Deep Neural Networks (Deep-Coral) \citep{Sun2017correlation}, Central Moment Discrepancy (CMD) \citep{zellinger2017central}, Higher-order Moment Matching (HoMM) \citep{Chen2020moment}, Minimum Discrepancy Estimation for Deep Domain Adaptation (MMDA) \citep{Rahman2020}, Deep Subdomain Adaptation (DSAN) \citep{Zhu2021subdomain}, Domain-Adversarial Neural Networks (DANN) \citep{ganin2016domain}, Conditional Adversarial Domain Adaptation (CDAN) \citep{Long2018conditional}, A DIRT-T Approach to Unsupervised Domain Adaptation (DIRT) \citep{Shu2018dirt} and Convolutional deep Domain Adaptation model for Time-Series data (CoDATS) \citep{Wilson2020adaptation}.

In total, in this benchmark, we trained more than $9000$ deep neural network models and we evaluated the whole aggregation benchmark of~\citet{dinu2022aggregation} for Amazon Review, MiniDomainNet, and HHAR for each of the eight (Example~\ref{ex:introduction} with and without iteration) density ratio estimation methods.

\subsection{Results}
\label{subsec:results}

\paragraph*{Summary}
In general, we observe a consistent improvement of iterated density ratio estimation compared to its related non-iterative methods, in all, synthetic, numerically controlled and benchmark domain adaptation experiments.
That is, iterative regularization is always suggested when enough computational resources are available.
One highlight is, that iteration almost never leads to worse results, i.e., overfitting, compared to the non-iterated version, when the same cross-validation procedure is used to select the regularization parameter as for selecting the number of iterations.

\begin{figure}[!h]
  \centering
  \begin{minipage}[b]{0.45\textwidth}
    \includegraphics[width=\textwidth]{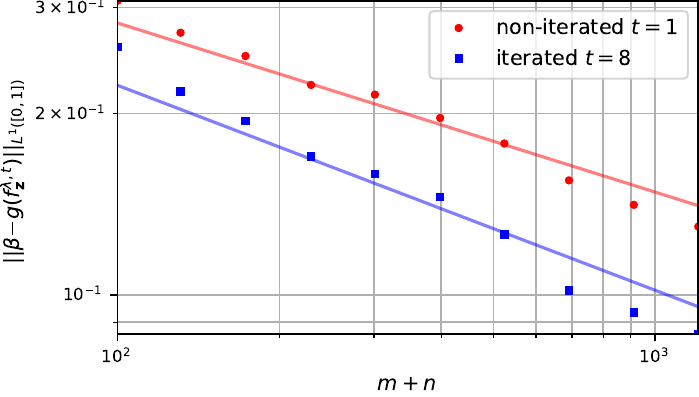}
  \end{minipage}
  \caption{Error $\|\beta-g(f_{\bf z}^{\lambda,t})\|_{L^1([0,1])}$ of estimating $\beta$ by $g(f_{\bf z}^{\lambda,t})$ as a function of sample size $m+n$ on dataset of~\citet{beugnot2021beyond}.
  Slope for iterated estimate ($t=8$, blue) is steeper, as suggested by Theorem~\ref{thm:error_rates_result}.}
  \label{fig:beugnot}
\end{figure}

\paragraph*{Results for highly regular problems}
For the highly regular problems adapted from~\citet{beugnot2021beyond}, we observe, for increasing sample size, a trend as predicted by Theorem~\ref{thm:error_rates_result}, see Figure~\ref{fig:beugnot}.
This result suggests that our approach successfully extends saturation prevention from supervised regression~\citep{beugnot2021beyond,li2022saturation} to density ratio estimation.
In addition, in the case of a Gaussian mixture $P$ and a Gaussian distribution $Q$ in Figure~\ref{fig:saturation} (right), we observe a clear improvement of iteratively regularized KuLSIF compared to KuLSIF.
The significance of the improvement (non-intersecting confidence intervals) becomes more apparent for increasing problem regularity (lower number of components).

\paragraph*{Results for known density ratio}
On all geometrically constructed learning problems, three (KuLSIF, Exp, SQ) out of four approaches are improved by iterated regularization.
In particular, KuLSIF and SQ are always improved and Exp is improved in eight out of ten cases.
The LR approach is not improved by iterated regularization.
However, its performance remained the same.
On average over all four approaches, iterated regularization clearly outperforms non-iterated regularization.

For our ablation study on the breast cancer data, we observe an error of $1.21$ for KuLSIF, $3.47$ for Exp, $8.83$ for LR and $15000$ for SQ.
In the case of SQ, we could not obtain a convergence to lower error, suggesting a closer look with regard to its regularization parameter ranges.
However, iterated KuLSIF achieves an error of $1.19$, iterated Exp $3.17$, iterated LR $3.94$ and iterated SQ $100$.
That is, our iterated regularization approach always improves the results despite the fact that the data is not drawn from Gaussian mixtures.

\paragraph*{Results for domain adaptation}
The detailed results are reported in Tables~\ref{tab:first_table_appendix}--\ref{tab:last_table_appendix}.
In $9$ out of $12$ cases (averaged over all tasks and domain adaptation methods), iterated regularization improves its related non-iteratively regularized vesions. In $3$ cases it has the same performance.
In the same average scenario, it never reduces the performance, i.e., causes overfitting.
In more detail, in the case of text data (AmazonReviews), all four density ratio methods (KuLSIF, LR, Exp, SQ) are improved on average across all domain adaptation methods and datasets, see Tables~\ref{tab:amazon_reviews_first}--\ref{tab:amazon_reviews_last} for detailed results.
In the case of image data (MiniDomainNet), three (KuLSIF, LR, SQ) out of four methods are improved in average over all datsets and tasks, while for Exp, average performance remains the same when the regularization is done iteratively.
For time-series data (HHAR) KuLSIF and Exp, averaged across all domain adaptation methods and datasets, are improved while LR and SQ have, on average, the same performance.

\paragraph*{Further observations}
Although LR outperforms KuLSIF in $50\%$ of the cases on time series data (see Tables~\ref{tab:HHAR_first}--\ref{tab:last_table_appendix}), the iteratively regularized KuLSIF method outperforms LR in $11$ out of $12$ cases.
This additionally underpins the importance of considering iterated regularization strategies in density ratio estimation.
Moreover, in all Tables~\ref{tab:first_table_appendix}--\ref{tab:last_table_appendix} in Appendix~\ref{appendix:experiments}, the maximum average performance is achieved by an iterated method, either iterated KuLSIF, iterated LR, iterated Exp or iterated SQ.

\section{Conclusion and Future Work}
In this work, we showed that common applications of density ratio estimation methods suffer from saturation, an issue preventing learning methods from achieving fast convergence rates in theoretical error bounds.

Our study contained four main parts:
Part (a) extends state-of-the-art error bounds from kernel regression to density ratio estimation, leading to optimal, i.e., improvable, error rates in the case of square loss.
The error rates suffer from saturation.
Part (b) introduces iterated regularization for a large class of density ratio estimation methods, proving that saturation can be prevented.
Part (c) empirically tests the iterated approaches for granting improvements in numerical datasets with known density ratio.
Part (d) evaluates the performance of the iterated regularization in practical problems of ensemble learning in unsupervised domain adaptation.
In all four parts of our study, clear advantages of the proposed approach could be identified.

One important future aspect is parameter choice, especially under the requirement of reliable methods with theoretical error guarantees, since our approach introduces the number of iterations as a new parameter.
We also plan to improve our (time-invariant) time-series domain adaptation using the time-dependent Bregman divergence approach of~\citet{Kim:23}.

A reference implementation of the methods presented
in this paper is available at: \href{https://github.com/lugruber/dre_iter_reg}{https://github.com/lugruber/dre\_iter\_reg}.

\section*{Acknowledgements}
We thank Otmar Scherzer, Sergei Pereverzyev, Stefan Kindermann, Marius-Constantin Dinu and four anonymous reviewers for helpful comments.
The ELLIS Unit Linz, the LIT AI Lab, the Institute for Machine Learning, are supported by the Federal State Upper Austria. We thank the projects Medical Cognitive Computing Center (MC3), INCONTROL-RL (FFG-881064), PRIMAL (FFG-873979), S3AI (FFG-872172), DL for GranularFlow (FFG-871302), EPILEPSIA (FFG-892171), AIRI FG 9-N (FWF-36284, FWF-36235), AI4GreenHeatingGrids (FFG- 899943), INTEGRATE (FFG-892418), ELISE (H2020-ICT-2019-3 ID: 951847), Stars4Waters (HORIZON-CL6-2021-CLIMATE-01-01). We thank Audi.JKU Deep Learning Center, TGW LOGISTICS GROUP GMBH, Silicon Austria Labs (SAL), FILL Gesellschaft mbH, Anyline GmbH, Google, ZF Friedrichshafen AG, Robert Bosch GmbH, UCB Biopharma SRL, Merck Healthcare KGaA, Verbund AG, GLS (Univ. Waterloo), Software Competence Center Hagenberg GmbH, Borealis AG, T\"{U}V Austria, Frauscher Sensonic, TRUMPF and the NVIDIA Corporation.

\section*{Impact Statement}

This paper presents work whose goal is to advance the field of Machine Learning. There are many potential societal consequences of our work, none which we feel must be specifically highlighted here.

\bibliography{arxiv_v2}
\bibliographystyle{icml2024}

\newpage
\appendix
\onecolumn
\section{Proof of Theorem~\ref{thm:error_rates_result}}
\label{appendix:proof_of_main_theorem}


We start with the following common technical assumption, essentially requiring the reproducing kernel Hilbert space $\mathcal{H}$ to be rich enough and regular in a weak sense (cf.~\citet{caponnetto2007optimal,marteau2019beyond,gizewski2022regularization,zellinger2023adaptive}):
\\
\vspace{-1em}
\begin{assumption}[technical assumption]
    \label{ass:technical}
    ~
    \vspace{-1em}
    \begin{itemize}[leftmargin=1em]
    \setlength\itemsep{0em}
    \item The kernel $k$ is continuous and bounded $\sup_{x\in\mathcal{X}}\norm{k(x,\cdot)}\leq R$.
    \item An expected risk minimizer $f_\mathcal{H}:=\argmin_{f\in\mathcal{H}} \mathcal{R}(f)$ exists.
    \item The quantities $|\ell_z(0)|, \norm{\nabla \ell_z(0)}, \mathrm{Tr}(\nabla^2 \ell_z(0))$ are almost surely bounded for any realization $z$ of the random variable $Z$.
\end{itemize}
\end{assumption}

To obtain fast error rates, we rely on a \textit{general self-concordance} property of loss functions~\citep{marteau2019beyond}, which follows from Assumption~\ref{ass:self-concordance}.
To this end, we denote by $A[k,h]$ the evaluation of an operator $A:\mathcal{H}\times\mathcal{H}\to\mathbb{R}$ at the arguments (functions) $k,h\in\mathcal{H}$.
\begin{assumption}[generalized self-concordance~\citep{marteau2019beyond}]%
    \label{ass:generalized_self_concordance}%
    \sloppy
    For any $z=(x,y)\in \X\times\Y$, $\ell_z(f)$ is convex in $f$, three times Fr\'echet differentiable, and it exists a set $\varphi(z)\subset\mathcal{H}$ such that
    \begin{align}
    \label{eq:generalized_self_concordant}
        \forall f\in\mathcal{H},\forall h,p\in\mathcal{H}: |\nabla^3 \ell_z(f)[p,h,h]|
        \leq \nabla^2 \ell_z(f)[h,h]\cdot \sup_{q\in\varphi(z)} \left|\langle p, q\rangle\right|.%
    \end{align}%
\end{assumption}%
Under Assumption~\ref{ass:generalized_self_concordance}, an upper bound on the excess risk of iteratively regularized kernel regression can be provided~\citep{beugnot2021beyond}, which reads in our notation as follows.
\begin{lemma}
[{\citet[Theorem~1, first part]{beugnot2021beyond}}]
\label{lemma:beugnot_thm1}
    Let Assumptions~\ref{ass:source_condition},~\ref{ass:technical} and~\ref{ass:generalized_self_concordance} be satisfied for some loss function $\ell$ with corresponding set $\varphi(z)$ satisfying $\sup_{g\in \varphi(z)} \norm{g}\leq R$ for some $R>0$.
    Let further $\delta\in(0,1]$ and fix some $\lambda\in (0,L_0)$, $m+n\geq N$.
    Then, the following bound on the excess risk of a minimizer of Eq.~\eqref{eq:empirical_loss_objective} holds 
    with probability greater than $1-\delta$:
    \begin{align}
        \label{eq:beugnot_main_theorem}
        \mathcal{R}(f_\z^{\lambda,t})-\mathcal{R}(f_\mathcal{H})\leq C_\mathrm{bias} \lambda^{2 s}+C_\mathrm{var} \frac{\mathrm{df}_\lambda}{m+n}, \text{~with~~} s=\min\{r+\frac{1}{2},t\}.
    \end{align}
    
\end{lemma}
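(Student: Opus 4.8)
The plan is to prove \eqref{eq:beugnot_main_theorem} by the classical bias--variance decomposition of regularized $M$-estimation, adapted to the iterated proximal scheme \eqref{eq:empirical_loss_objective}, whose empirical risk I abbreviate $\widehat{\mathcal{R}}(f):=\frac{1}{|\z|}\sum_{i=1}^{m+n}\ell(y_i,f(x_i))$, and to the nonlinearity of a generalized self-concordant loss. The guiding idea is that, locally around $f_\mathcal{H}$, the objective $\widehat{\mathcal{R}}$ is well approximated by a quadratic with curvature $\Hess(f_\mathcal{H})$, so that the recursion behaves like linear iterated Tikhonov regularization in the metric induced by $\Hess(f_\mathcal{H})$; the excess risk is then read off from spectral (filter-function) calculus. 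As a first step I would invoke Assumption~\ref{ass:generalized_self_concordance} to install the comparison inequalities of generalized self-concordance: on a Dikin-type neighborhood of $f_\mathcal{H}$, whose radius is controlled by $\sup_{g\in\varphi(z)}\norm{g}\le R$, the Hessian is multiplicatively stable, $e^{-\tau}\Hess(f_\mathcal{H})\preceq\Hess(f)\preceq e^{\tau}\Hess(f_\mathcal{H})$, and the excess risk is two-sided comparable to the weighted distance $\norm{\Hess(f_\mathcal{H})^{1/2}(f-f_\mathcal{H})}^2$. This reduces the claim to bounding that weighted distance at $f=f_\z^{\lambda,t}$.

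For the bias I would introduce the population proximal path $f^{\lambda,t}$, defined by the recursion of \eqref{eq:empirical_loss_objective} with $\widehat{\mathcal{R}}$ replaced by $\mathcal{R}$, linearize its first-order conditions at $f_\mathcal{H}$, and identify the residual operator acting on $f_\mathcal{H}$ as the filter $\bigl(\lambda(\Hess(f_\mathcal{H})+\lambda I)^{-1}\bigr)^{t}$. Inserting the source condition $f_\mathcal{H}=\Hess(f_\mathcal{H})^r v$ of Assumption~\ref{ass:source_condition} and using the elementary spectral estimate $\sup_{\sigma\ge0}\sigma^{r+\frac12}\bigl(\tfrac{\lambda}{\sigma+\lambda}\bigr)^{t}\lesssim\lambda^{\min\{r+\frac12,\,t\}}$ yields a squared bias of order $\lambda^{2s}$ with $s=\min\{r+\tfrac12,t\}$. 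This is exactly where iteration enters: the qualification of the $t$-fold scheme is $t$ rather than $1$, so the exponent no longer saturates at $r=\tfrac12$.

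For the variance I would compare $f_\z^{\lambda,t}$ to $f^{\lambda,t}$ and control the two sources of sampling error at $f_\mathcal{H}$, namely the gradient deviation $\Hess_\lambda(f_\mathcal{H})^{-1/2}\bigl(\nabla\widehat{\mathcal{R}}(f_\mathcal{H})-\nabla\mathcal{R}(f_\mathcal{H})\bigr)$ and the operator deviation $\Hess_\lambda(f_\mathcal{H})^{-1/2}\bigl(\Hess_\z(f_\mathcal{H})-\Hess(f_\mathcal{H})\bigr)\Hess_\lambda(f_\mathcal{H})^{-1/2}$. Since $\nabla\mathcal{R}(f_\mathcal{H})=0$, a Bernstein inequality in Hilbert space gives the first a squared magnitude of order $\mathrm{df}_\lambda/(m+n)$, while a non-commutative Bernstein inequality gives the second an operator-norm magnitude of order $\sqrt{\mathrm{df}_\lambda/(m+n)}$; the almost-sure bounds of Assumption~\ref{ass:technical} supply the needed moment control and determine the thresholds $\lambda<L_0$ and $m+n\ge N$. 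Propagating these deviations through the uniformly bounded smoothing filters $g_t(\sigma)=\sigma^{-1}\bigl(1-(\lambda/(\sigma+\lambda))^{t}\bigr)$ produces the variance contribution $C_\mathrm{var}\,\mathrm{df}_\lambda/(m+n)$.

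The main obstacle is the coupling of the nonlinearity with the iteration: each proximal step in \eqref{eq:empirical_loss_objective} is a nonlinear solve, so linear spectral calculus cannot be applied verbatim. I would therefore run a bootstrap (fixed-point) argument showing that, on the high-probability event where the two deviations above are small, every iterate $f_\z^{\lambda,1},\dots,f_\z^{\lambda,t}$ remains inside the Dikin neighborhood, so that the Hessian may be frozen at $\Hess(f_\mathcal{H})$ up to constants at each step and the linear surrogate analysis becomes legitimate. The delicate point is to prevent the constants $C_\mathrm{bias}$, $C_\mathrm{var}$ and the neighborhood radius from degrading as $t$ grows; this needs uniform-in-$t$ control of both the residual filter $\bigl(\lambda(\Hess(f_\mathcal{H})+\lambda I)^{-1}\bigr)^{t}$ and the smoothing filter $g_t$, together with the fact that once $t\ge r+\tfrac12$ the bias saturates at the source-condition limit while the variance remains $t$-independent to leading order. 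Collecting the localization, the bias estimate, the concentration bounds and the bootstrap on a single event of probability at least $1-\delta$ then yields \eqref{eq:beugnot_main_theorem}.
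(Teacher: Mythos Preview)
The paper does not supply its own proof of this lemma: it is quoted verbatim as \citet[Theorem~1, first part]{beugnot2021beyond}, and the text immediately following it simply refers the reader to the supplementary material of that reference for the constants $L_0,N,C_\mathrm{bias},C_\mathrm{var}$. In other words, the paper treats Lemma~\ref{lemma:beugnot_thm1} as a black-box import and uses it only as an ingredient in the proof of Theorem~\ref{thm:error_rates_result}.

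Your proposal, by contrast, sketches an actual proof strategy---bias--variance decomposition, localization via the Dikin ellipsoid from generalized self-concordance, spectral/filter-function calculus for the iterated Tikhonov scheme, Bernstein-type concentration for the gradient and Hessian deviations, and a bootstrap to keep the iterates in the neighborhood where the Hessian is frozen. This is a faithful high-level outline of the argument in \citet{beugnot2021beyond} (building on \citet{marteau2019beyond}), and there is no mathematical gap at the level of a sketch. But it goes well beyond what the present paper does: here the lemma is simply cited, not reproved, so for the purposes of matching the paper you need only state the result and attribute it.
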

We refer to the supplementary material of~\citet{beugnot2021beyond} for the exact values of $L_0,N,C_\mathrm{var},C_\mathrm{bias}$, which depend on $r,\alpha,S,R,t,\delta$ and the distribution $\rho$ but not on the sample size $m+n$.

We will need the following statement.
\begin{lemma}
    \label{lemma:technical}
    Let $\ell:\mathcal{Y}\times\mathbb{R}\to\mathbb{R}$ satisfy Assumption~\ref{ass:self-concordance}.
    Then, for any $z=(x,y)\in \X\times\Y$, the functional $\ell_z:\mathcal{H}\to\mathbb{R}$ defined by $\ell_z(f):=\ell(y,f(x)), f\in\mathcal{H}$ satisfies Assumption~\ref{ass:generalized_self_concordance}.
\end{lemma}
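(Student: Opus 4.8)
The plan is to reduce the Fr\'echet-derivative inequality in Assumption~\ref{ass:generalized_self_concordance} to the one-dimensional inequality $|\ell_y'''(\eta)|\le\ell_y''(\eta)$ in Assumption~\ref{ass:self-concordance}, by exploiting the fact that $\ell_z$ depends on $f\in\mathcal{H}$ only through the scalar $f(x)=\langle f,k(x,\cdot)\rangle$. First I would fix $z=(x,y)$, write $k_x:=k(x,\cdot)\in\mathcal{H}$, and note that $\ell_z(f)=\ell_y(\langle f,k_x\rangle)$. Then the chain rule for Fr\'echet derivatives of the composition of the linear map $f\mapsto\langle f,k_x\rangle$ with the scalar function $\ell_y$ gives, for all $f,h,p\in\mathcal{H}$,
\begin{align*}
    \nabla^2\ell_z(f)[h,h]&=\ell_y''(\langle f,k_x\rangle)\,\langle h,k_x\rangle^2,\\
    \nabla^3\ell_z(f)[p,h,h]&=\ell_y'''(\langle f,k_x\rangle)\,\langle p,k_x\rangle\,\langle h,k_x\rangle^2.
\end{align*}
These identities are the routine computation I would not grind through in full; convexity and threefold Fr\'echet differentiability of $\ell_z$ follow immediately from the corresponding scalar properties of $\ell_y$ assumed in Assumption~\ref{ass:self-concordance}.

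Next I would take absolute values and apply the scalar self-concordance bound pointwise at the argument $\eta=\langle f,k_x\rangle$:
\begin{align*}
    |\nabla^3\ell_z(f)[p,h,h]|
    =|\ell_y'''(\langle f,k_x\rangle)|\,|\langle p,k_x\rangle|\,\langle h,k_x\rangle^2
    \le\ell_y''(\langle f,k_x\rangle)\,\langle h,k_x\rangle^2\,|\langle p,k_x\rangle|
    =\nabla^2\ell_z(f)[h,h]\cdot|\langle p,k_x\rangle|.
\end{align*}
This is exactly Eq.~\eqref{eq:generalized_self_concordant} with the choice $\varphi(z):=\{k(x,\cdot)\}$, since then $\sup_{q\in\varphi(z)}|\langle p,q\rangle|=|\langle p,k_x\rangle|$. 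Finally I would observe that this $\varphi(z)$ is the one needed downstream: by the boundedness part of Assumption~\ref{ass:technical}, $\sup_{q\in\varphi(z)}\norm{q}=\norm{k(x,\cdot)}\le R$, so the hypothesis $\sup_{g\in\varphi(z)}\norm{g}\le R$ in Lemma~\ref{lemma:beugnot_thm1} is met with the same constant $R$.

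I do not expect a serious obstacle here; the lemma is essentially a bookkeeping exercise. The only point requiring a little care is making sure the inequality $|\ell_y'''|\le\ell_y''$ may be invoked \emph{pointwise} at the running argument $\langle f,k_x\rangle$ for every $f\in\mathcal{H}$ — which is fine because Assumption~\ref{ass:self-concordance} asserts it for all $\eta\in\mathbb{R}$ — and handling the nonnegativity of $\ell_y''$ (guaranteed by convexity of $\ell_y$) so that the factor $\ell_y''(\langle f,k_x\rangle)\langle h,k_x\rangle^2$ can be identified with $\nabla^2\ell_z(f)[h,h]\ge0$ without a spurious sign. A secondary subtlety is the precise form of the higher-order chain rule in infinite dimensions, but since the inner map is linear and continuous this collapses to the elementary scalar case, so no Banach-space technicalities actually arise.
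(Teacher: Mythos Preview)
Your proposal is correct and follows essentially the same approach as the paper: reduce to the scalar inequality via the chain rule through the evaluation functional $f\mapsto f(x)=\langle f,k(x,\cdot)\rangle$, then read off $\varphi(z)$. The only cosmetic difference is that the paper sets $\varphi(z)=\{y\,k(x,\cdot)\}$ rather than your $\{k(x,\cdot)\}$; since $|y|=1$ for $y\in\{-1,1\}$ these give the same bound and the same norm $\norm{y\,k(x,\cdot)}=\norm{k(x,\cdot)}\le R$, so nothing changes.
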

\begin{proof}
    Let $\ell(y,\eta)$ be convex in $\eta$, then, for all $z=(x,y)\in\mathcal{X}\times\mathcal{Y}, f_1,f_2\in\mathcal{H}, t\in [0,1]$, it holds that
    \begin{align*}
        \ell_z(t f_1+(1-t) f_2)&=\ell(y,t f_1(x)+(1-t)f_2(x))\\
        &\leq t \ell(y,f_1(x))+(1-t)\ell(y,f_2(x))\\
        &=t \ell_z(f_1)+(1-t)\ell_z(f_2)
    \end{align*}
    and $\ell_z(f)$ is convex in $f$.
    
    To show Fr\'echet differentiability of $\ell_z(f)$, let us introduce the notation $\ell_y(\eta):=\ell(y,\eta)$ which is Fr\'echet differentiable with $\nabla \ell_y(\eta)=\ell_y'(\eta)$ by definition.
    We also denote by $B_x(f)=f(x)$ the (linear) evaluation functional, which is bounded for $f\in\mathcal{H}$ and it's Fr\'echet derivative $\nabla B_x$ therefore exists and is again the evaluation functional $B_x$.
    Then, by definition, all derivatives in the composition $\nabla \ell_y(B_x(f))\circ \nabla B_x(f)$ exist and are bounded, and therefore it  equals the desired quantity $\nabla \ell_z(f)=\nabla (\ell_y\circ B_x)(f)$ by the chain rule.
    The second and third Fr\'echet derivative can be obtained analogously.
    
    Now note that, for all $z=(x,y)\in\mathcal{X}\times\mathcal{Y}$,
    \begin{align*}
        \left|\nabla^3 \ell_z(f)[p,h,h]\right| &= \left|p(x)h(x)h(x)\ell_y'''(f(x))\right|\\
        &=\left|p(x)h(x)h(x)\left|\ell_y'''(f(x))\right|\right|\\
        &\leq\left|p(x)h(x)h(x)\ell_y''(f(x))\right|\\
        &=\left|p(x)\nabla^2\ell_z(f)[h,h]\right|\\
        &=\left|p(x)\right|\nabla^2\ell_z(f)[h,h],
    \end{align*}
    where the last line follows from the positive semi-definiteness of the Hessian.
    Note that 
    \begin{align*}
        |p(x)|
        =|y|\cdot\left| \langle p,k(x,\cdot)\rangle\right|
        =\left| \langle p,y k(x,\cdot)\rangle\right|\leq \sup_{q\in\varphi(z)} \left| \langle p,q\rangle\right|
    \end{align*}
    follows from the reproducing property and $\varphi(z):=\{y k(x,\cdot)\}$.
\end{proof}

We are now able to combine above reasoning to prove Theorem~\ref{thm:error_rates_result}.

As we note in Section~\ref{sec:notation}, we neglect pathological loss functions and consider only strictly proper composite $\ell$ with twice differentiable Bayes risk $G$.
As a consequence, from Lemma~\ref{lemma:bregman_form_of_loss}, we get
\begin{align}
\label{eq:proof_first_eq}
    B_F(\beta,g(f))-B_F(\beta,g(f_\mathcal{H})) = 2 \left(\mathcal{R}(f)-\mathcal{R}(f^\ast)-\mathcal{R}(f_\mathcal{H})+\mathcal{R}(f^\ast)\right)
    =2 \left(\mathcal{R}(f)-\mathcal{R}(f_\mathcal{H})\right).
\end{align}
Our goal is now to apply Lemma~\ref{lemma:beugnot_thm1}, which requires that Assumption~\ref{ass:generalized_self_concordance} is satisfied with some $\varphi(z)$ satisfying $\sup_{g\in \varphi(z)} \norm{g}\leq R$ and $R>0$.
This follows from Lemma~\ref{lemma:technical} and the fact that $\sup_{(x,y)\in\mathcal{X}\times\mathcal{Y}} \norm{y k(x,\cdot)}\leq R$ by Assumption~\ref{ass:technical}.

Let us choose
\begin{align*}
    \lambda_\ast:=L_0 (m+n)^{-\frac{\alpha}{2 s\alpha +1}}
\end{align*}
which balances the bias and the variance term in Lemma~\ref{lemma:beugnot_thm1} up to a constant factor (that ensures $\lambda_\ast\leq L_0$).
Then, Lemma~\ref{lemma:beugnot_thm1} can be applied and it holds that
\begin{align*}
    B_F(\beta,g(f))-B_F(\beta,g(f_\mathcal{H}))
    \leq 2\left( C_\mathrm{bias} \lambda_\ast^{2 s}+C_\mathrm{var} \frac{S \lambda_\ast^{-\frac{1}{\alpha}}}{m+n}\right)
    =2( C_\mathrm{bias}+C_\mathrm{var} S) (m+n)^{-\frac{2s\alpha}{2s\alpha+1}}
\end{align*}
with probability at least $1-2\delta$ and $m+n\geq N$, for an $N$ independent of $m+n$.
Theorem~\ref{thm:error_rates_result} follows by fixing $$C:=2( C_\mathrm{bias}+C_\mathrm{var} S).$$

\section{Optimization Error}
\label{appendix:optimization_error}

This Section is aimed to provide a more detailed sketch on how one can obtain the error bound Eq.~\eqref{eq:main_bound_numerical}. The first quantity we need to consider in this context is the so called Newton decrement, as developed in \citet{marteau2019globally}. Starting from $f^{\lambda,0}_{\z}=\bar{f}^{\lambda,0}_{\z}=0$, we define the following expressions for $k>0$ :
 \begin{align}
  f^{\lambda,t+1}_{\z}&= \argmin_{f\in\mathcal{H}} \frac{1}{|\z|}\sum_{i=1}^{m+n} \ell(y_i,f(x_i))+ \frac{\lambda}{2}\norm{f-f^{\lambda,t}_{\z}}^2:=\argmin_{f\in\mathcal{H}} \widehat{\mathcal{R}}_\lambda^{t}(f), \quad \nu_\lambda^{t+1}(f) :=\left\|\Hess_\lambda(f)^{-\frac{1}{2}} \nabla \widehat{\mathcal{R}}_\lambda^{t}(f)\right\|,  \label{eq:newton_true}\\
 \bar{f}^{\lambda,t+1}_{\z}&\approx \argmin_{f\in\mathcal{H}} \frac{1}{|\z|}\sum_{i=1}^{m+n} \ell(y_i,f(x_i))+ \frac{\lambda}{2}\norm{f-\bar{f}^{\lambda,t}_{\z}}^2:=\argmin_{f\in\mathcal{H}} \bar{\mathcal{R}}_\lambda^{t}(f), \quad \bar{\nu}_\lambda^{t+1}(f) :=\left\|\Hess_\lambda(f)^{-\frac{1}{2}} \nabla \bar{\mathcal{R}}_\lambda^{t}(f)\right\| , \label{eq:newton_approx}
 \end{align}
 
for a numerical approximation $\bar{f}_{\z}^{\lambda,t}$ of $f^{\lambda,t}_{\z}$ computed as described in Subsection \ref{subsec:algo}.
 Next we need to enforce a bound on the true Newton decrement in Eq.~\eqref{eq:newton_true} when we only have access to $\bar{\mathcal{R}}_\lambda^{t}$. This has been done in \citet[Proposition 1]{beugnot2021beyond}, and we repeat their result in our notation:

\begin{lemma}\citep[Proposition 1]{beugnot2021beyond} \label{lem:newton_decr}
Let $\varepsilon>0$ the target precision. Assume that we can solve each sub-problem with precision $\bar{\epsilon}_k$ :
\begin{align*}
\forall k \in\{1, \ldots, t\}, \quad \bar{\nu}_\lambda^{k-1}\left(\bar{f}^{\lambda,k}_{\z}\right) \leq \bar{\varepsilon}_t=\varepsilon \frac{1.4^{k-t}}{t},
\end{align*}
and that $\varepsilon \leq \frac{\sqrt{\lambda}}{2 R}$. This suffice to achieve an error $\varepsilon$ on the target function:
\begin{align*}
\nu_\lambda^{t-1}\left(\bar{f}^{\lambda,t}_{\z}\right) \leq \varepsilon
\end{align*}
\end{lemma}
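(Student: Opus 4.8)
Since the statement is Proposition~1 of \citet{beugnot2021beyond} transcribed into our notation, the plan is to verify that their standing hypotheses hold here and then run their argument. The hypotheses are in place: by Lemma~\ref{lemma:technical} each $\ell_z$ is generalized self-concordant (Assumption~\ref{ass:generalized_self_concordance}) with $\varphi(z)=\{y\,k(x,\cdot)\}$, so $\sup_z\sup_{q\in\varphi(z)}\norm{q}=\sup_{x}\norm{k(x,\cdot)}\le R$ by Assumption~\ref{ass:technical}, and every Tikhonov sub-objective in Eq.~\eqref{eq:newton_true}--\eqref{eq:newton_approx} is $\lambda$-strongly convex because of its quadratic penalty. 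First I would record the elementary observation that, for each $k$, the exact and numerical versions of the $k$-th sub-objective -- $\widehat{\mathcal{R}}_\lambda^{k-1}$, centered at $f^{\lambda,k-1}_{\z}$, and $\bar{\mathcal{R}}_\lambda^{k-1}$, centered at $\bar f^{\lambda,k-1}_{\z}$ -- share the same data term, hence the same Hessian $\Hess_\lambda(\cdot)$, and that their gradients differ by the constant shift $\nabla\widehat{\mathcal{R}}_\lambda^{k-1}(f)-\nabla\bar{\mathcal{R}}_\lambda^{k-1}(f)=\lambda\bigl(\bar f^{\lambda,k-1}_{\z}-f^{\lambda,k-1}_{\z}\bigr)$, independently of $f$.

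Next I would apply $\Hess_\lambda(\bar f^{\lambda,k}_{\z})^{-1/2}$ to this identity, use $\norm{\Hess_\lambda(\cdot)^{-1/2}}_{\mathrm{op}}\le\lambda^{-1/2}$ (since $\Hess_\lambda\succeq\lambda I$) and the triangle inequality, obtaining
\[
\nu_\lambda^{k-1}\!\bigl(\bar f^{\lambda,k}_{\z}\bigr)\le\bar\nu_\lambda^{k-1}\!\bigl(\bar f^{\lambda,k}_{\z}\bigr)+\sqrt\lambda\,\norm{\bar f^{\lambda,k-1}_{\z}-f^{\lambda,k-1}_{\z}}\le\bar\varepsilon_k+\sqrt\lambda\,e_{k-1},
\]
where $e_k:=\norm{\bar f^{\lambda,k}_{\z}-f^{\lambda,k}_{\z}}$ and $e_0=0$. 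It then remains to bound $e_k$ by a recursion. Writing $\widehat g_k:=\argmin_f\bar{\mathcal{R}}_\lambda^{k-1}(f)$ for the \emph{exact} minimizer of the numerical sub-objective, I split $e_k\le\norm{\bar f^{\lambda,k}_{\z}-\widehat g_k}+\norm{\widehat g_k-f^{\lambda,k}_{\z}}$. For the second term, $\widehat g_k$ and $f^{\lambda,k}_{\z}$ are the exact proximal steps of the same convex data term at the centers $\bar f^{\lambda,k-1}_{\z}$ and $f^{\lambda,k-1}_{\z}$, and the proximal map $c\mapsto\argmin_f\{\tfrac1{|\z|}\sum_i\ell_{z_i}(f)+\tfrac\lambda2\norm{f-c}^2\}$ is non-expansive in $c$, so $\norm{\widehat g_k-f^{\lambda,k}_{\z}}\le e_{k-1}$. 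For the first term, $\lambda$-strong convexity together with the generalized self-concordance of $\bar{\mathcal{R}}_\lambda^{k-1}$ gives a comparison $\norm{\bar f^{\lambda,k}_{\z}-\widehat g_k}\le\tfrac{c_0}{\sqrt\lambda}\,\bar\nu_\lambda^{k-1}(\bar f^{\lambda,k}_{\z})\le\tfrac{c_0}{\sqrt\lambda}\,\bar\varepsilon_k$ with a universal constant $c_0$, valid provided the Newton decrement lies in the region where the self-concordant quadratic bounds hold with constant close to $1$ -- which is exactly what the hypothesis $\varepsilon\le\sqrt\lambda/(2R)$ secures, after an induction over $k$ showing every iterate stays in that region. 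Hence $e_k\le e_{k-1}+c_0\bar\varepsilon_k/\sqrt\lambda$; unrolling from $e_0=0$ and inserting $\bar\varepsilon_k=\varepsilon\,1.4^{\,k-t}/t$, the geometric sum $\sum_{j=1}^{k}1.4^{\,j-t}$ is bounded by a fixed multiple of $1.4^{\,k-t}$, so $\sqrt\lambda\,e_{t-1}$ is at most a constant times $\varepsilon/t$; plugging this and $\bar\varepsilon_t=\varepsilon/t$ into the displayed inequality at $k=t$, the $1/t$ factor absorbs the accumulated constants and yields $\nu_\lambda^{t-1}(\bar f^{\lambda,t}_{\z})\le\varepsilon$.

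The hard part will be the self-concordance step bounding $\norm{\bar f^{\lambda,k}_{\z}-\widehat g_k}$ by a near-tight multiple of the numerical Newton decrement: this requires the local-norm comparison inequalities for generalized self-concordant functionals from \citet{marteau2019beyond,beugnot2021beyond} and a careful tracking of the region of validity, which is precisely why the explicit threshold $\sqrt\lambda/(2R)$ and the specific ratio $1.4$ -- chosen so that the per-step error amplification never exceeds the geometric decay of the tolerances $\bar\varepsilon_k$ -- enter the statement. Everything else, namely the gradient/Hessian identity, non-expansiveness of the proximal map, and the geometric summation, is routine once that step is settled.
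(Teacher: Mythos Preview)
The paper does not supply a proof of this lemma: it is stated verbatim as \citet[Proposition~1]{beugnot2021beyond} and used as a black box in Appendix~\ref{appendix:optimization_error}. So there is no in-paper argument to compare against.

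That said, your sketch is faithful to the mechanism of the original proof. The three ingredients you isolate are exactly the right ones: (i) the gradient identity $\nabla\widehat{\mathcal{R}}_\lambda^{k-1}(f)-\nabla\bar{\mathcal{R}}_\lambda^{k-1}(f)=\lambda(\bar f^{\lambda,k-1}_{\z}-f^{\lambda,k-1}_{\z})$, which after premultiplying by $\Hess_\lambda^{-1/2}$ converts the center discrepancy into a Newton-decrement perturbation of size $\sqrt\lambda\,e_{k-1}$; (ii) non-expansiveness of the proximal map to propagate $e_{k-1}$ across the exact update; and (iii) a self-concordance localization bound turning the numerical decrement $\bar\varepsilon_k$ into a distance bound of order $\bar\varepsilon_k/\sqrt\lambda$. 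One caveat on the bookkeeping: your recursion $e_k\le e_{k-1}+c_0\bar\varepsilon_k/\sqrt\lambda$ gives, after summing the geometric series, $\nu_\lambda^{t-1}(\bar f^{\lambda,t}_{\z})\le(1+c_0/0.4)\,\varepsilon/t$, which is $\le\varepsilon$ only once $t\ge 1+c_0/0.4$. To cover small $t$ one must actually pin down that, in the Dikin region secured by $\varepsilon\le\sqrt\lambda/(2R)$, the generalized self-concordance inequalities of \citet{marteau2019beyond} deliver $c_0$ sufficiently close to $1$ that the factor $1.4$ compensates; you correctly flag this as the delicate step, and it is precisely where the specific numerical choices in the statement are calibrated.
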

 
 To compute this estimator in practice, one would need to solve $t$ optimization problems with decreasing precision. As the CG-algorithm that we are applying (cf. Section~\ref{subsec:algo}) has error complexity $\log( \varepsilon)$ (see Remark~\eqref{eq:rem_numerical}), the complexity of applying this algorithm iteratively would only be $t$ times bigger than computing a single iteration, so achieving the required small magnitudes of the weighted gradient norms Eq.~\eqref{eq:newton_approx}--\eqref{eq:newton_true} is computationally feasible.
 Analyzing the computational burden in terms of the number of samples required to compute the numerical solution of Eq.~\eqref{eq:empirical_loss_objective} would be an interesting avenue for future work.
 To conclude our investigations on how the theoretical guarantees change by the use of inexact solvers, we can adapt \citet[Proposition 2]{beugnot2021beyond} with the same arguments as used in our proof of Theorem ~\ref{thm:error_rates_result}:
 \begin{lemma}
 Let $\delta \in(0,1)$ and assume that the statistical assumptions of Theorem~\ref{thm:error_rates_result} hold as well as the optimization assumptions of Lemma \ref{lem:newton_decr}. Then, the following bound on the excess risk holds with probability greater than $1- \delta$ :
 \begin{align*}
 B_F(\beta,g(\bar{f}_{\z}^{\lambda,t}))&-B_F(\beta,g(f_\mathcal{H}))
\le\tilde{C} \!\left( (m+n)^{-\frac{2s\alpha}{2s\alpha+1}}+\varepsilon\right)
\end{align*}
 with $\tilde{C}>0$ independent of $n,m$ and $\varepsilon$.
 \end{lemma}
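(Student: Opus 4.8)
The plan is to mirror the structure of the proof of Theorem~\ref{thm:error_rates_result}, replacing the exact minimizer $f_\z^{\lambda,t}$ by its numerical approximation $\bar f_\z^{\lambda,t}$ and carrying along the additional optimization error. First I would apply the Bregman-to-risk identity of Lemma~\ref{lemma:bregman_form_of_loss}, exactly as in Eq.~\eqref{eq:proof_first_eq}, but evaluated at the approximate iterate. Since that identity holds for an \emph{arbitrary} $f\in\mathcal{H}$, it applies verbatim to $\bar f_\z^{\lambda,t}$ and gives
\begin{align*}
    B_F(\beta,g(\bar f_\z^{\lambda,t}))-B_F(\beta,g(f_\mathcal{H}))
    =2\bigl(\mathcal{R}(\bar f_\z^{\lambda,t})-\mathcal{R}(f_\mathcal{H})\bigr),
\end{align*}
so it suffices to bound the excess risk of $\bar f_\z^{\lambda,t}$ by $\tfrac{\tilde C}{2}\bigl((m+n)^{-2s\alpha/(2s\alpha+1)}+\varepsilon\bigr)$.

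Second, I would verify that the structural hypotheses required for the inexact analysis are in force. Lemma~\ref{lemma:technical} shows that every $\ell_z$ inherits generalized self-concordance (Assumption~\ref{ass:generalized_self_concordance}) with the one-point set $\varphi(z)=\{y\,k(x,\cdot)\}$, and Assumption~\ref{ass:technical} yields $\sup_{(x,y)}\|y\,k(x,\cdot)\|\le R$. Combined with the source, capacity and technical assumptions, these are exactly the premises of the inexact excess-risk bound~\citet[Proposition~2]{beugnot2021beyond}, while the optimization premises of Lemma~\ref{lem:newton_decr} guarantee $\nu_\lambda^{t-1}(\bar f_\z^{\lambda,t})\le\varepsilon$, i.e. the true Newton decrement at the computed iterate is of size $\varepsilon$.

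Third, I would invoke the adapted Proposition~2: self-concordance converts Newton-decrement control into function-value control, so the excess risk of the inexact iterate splits into the same statistical bias-variance expression as in Lemma~\ref{lemma:beugnot_thm1} plus an additive optimization term,
\begin{align*}
    \mathcal{R}(\bar f_\z^{\lambda,t})-\mathcal{R}(f_\mathcal{H})
    \le C_\mathrm{bias}\lambda^{2s}+C_\mathrm{var}\frac{\mathrm{df}_\lambda}{m+n}+C_\mathrm{opt}\,\varepsilon.
\end{align*}
Choosing $\lambda_\ast=L_0 (m+n)^{-\alpha/(2s\alpha+1)}$ exactly as in the proof of Theorem~\ref{thm:error_rates_result} balances the first two terms, and $\mathrm{df}_\lambda\le S\lambda^{-1/\alpha}$ from Assumption~\ref{ass:capacity_condition} turns them into the statistical rate $(m+n)^{-2s\alpha/(2s\alpha+1)}$. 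Setting $\tilde C:=2\max\{C_\mathrm{bias}+C_\mathrm{var}S,\,C_\mathrm{opt}\}$ and multiplying by the factor $2$ from the first step completes the argument, with the probability bookkeeping ($1-\delta$ after absorbing the constant, as in the statistical proof) handled identically.

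The hard part will be the adaptation of Proposition~2 itself, namely showing that the optimization error enters \emph{purely additively} as $O(\varepsilon)$ and does not compound across the $t$ proximal sub-problems. The geometric precision schedule $\bar\varepsilon_k=\varepsilon\,1.4^{k-t}/t$ of Lemma~\ref{lem:newton_decr} is designed precisely so that the per-step errors, when propagated through the iterated-Tikhonov recursion, sum to a bounded multiple of $\varepsilon$; verifying this requires the self-concordance-based comparison between the Newton decrement and the weighted distance $\|\bar f_\z^{\lambda,t}-f_\z^{\lambda,t}\|_{\Hess_\lambda}$, together with the condition $\varepsilon\le\sqrt{\lambda}/(2R)$ that keeps the iterates inside the Dikin-type neighborhood where the self-concordant bounds remain valid. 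Once this localized comparison is in place, the remaining balancing and constant-collection are the same routine steps used for the exact iterate.
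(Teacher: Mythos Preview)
Your proposal is correct and follows essentially the same approach as the paper: the paper simply states that one ``adapt[s] \citet[Proposition~2]{beugnot2021beyond} with the same arguments as used in our proof of Theorem~\ref{thm:error_rates_result},'' and your write-up makes those arguments explicit (Bregman-to-risk via Lemma~\ref{lemma:bregman_form_of_loss}, verification of generalized self-concordance via Lemma~\ref{lemma:technical}, invocation of Proposition~2 to get the additive $\varepsilon$-term, and balancing $\lambda$ as in Theorem~\ref{thm:error_rates_result}). Your additional discussion of why the optimization error stays additive under the geometric precision schedule is more detailed than what the paper provides, but it is entirely in line with---and a faithful unpacking of---the cited Proposition~2.
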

 This is all we need to obtain the bound Eq.~\eqref{eq:main_bound_numerical}.

\section{Derivation of Iterated KuLSIF}
\label{sec:app_kulsif}

Starting from objective~\eqref{eq:novel_objective}
\begin{align*}
    f^{\lambda,t+1}:=\argmin_{f\in\mathcal{H}} B_F\!\left(\beta,g(f)\right) + \frac{\lambda}{2} \norm{f-f^{\lambda,t}}^2
\end{align*}
where $f^{\lambda,0}=0$ we get for KuLSIF with $F(h)=\int_\X (h(x)-1)^2/2\diff Q(x)$ and $g(f)=f$
\begin{align*}
    f^{\lambda,t+1} &= \argmin_{f\in\mathcal{H}}  \frac12 \int (\beta - f)^2 dQ + \frac{\lambda}{2} \norm{f-f^{\lambda,t}}^2 \\
     &= \argmin_{f\in\mathcal{H}}  \frac12 \int ((\beta -f^{\lambda,t})-f)^2 dQ + \frac{\lambda}{2} \norm{f}^2.
\end{align*}
Setting $r_t=\beta-f^{\lambda,t}$ gives
\begin{align*}
    \argmin_{f\in\mathcal{H}} \int \frac12 (r_t-f)^2 dQ + \frac{\lambda}{2} \norm{f}^2
\end{align*}
Expanding the squares, using $\beta=\frac{dP}{dQ}$ and ignoring the summands only containing $\beta$ gives:
\begin{align*}
\argmin_{f\in\mathcal{H}} \frac12 \int f^2 dQ +\int f^{\lambda,t} f dQ - \int f dP +\frac{\lambda}{2} \norm{f}^2.
\end{align*}
 The empirical computation of these quantities results in
\begin{align}
    \label{app:empirical_loss_objective}
    f^{\lambda,t+1}_{\z} =\argmin_{f\in\mathcal{H}} \frac{1}{n} \sum_{i=1}^n (f(x'_i)^2 + f(x'_i) f^{\lambda,t}(x'_i)) - \frac{1}{m} \sum_{l=1}^m f(x_l) + \frac{\lambda}{2} \norm{f}^2
\end{align}
By choosing
\begin{align*}
    f^{\lambda,t+1}_{\z} = \sum_{i=1}^n \alpha_i^{t+1} k( \cdot, x'_i) + \sum_{l=1}^m \beta_l^{t+1} k( \cdot, x_l)
\end{align*}
as in~\citet{Kanamori:12} (following the representer Theorem), plugging this into Eq.~\eqref{app:empirical_loss_objective} and taking the derivatives w.r.t. $\alpha_j^{t+1}$ and $\beta_l^{t+1}$ and setting them to $0$, we get as analytic expressions for the coefficients: 
\begin{align*}
    \beta^{t+1} &= \frac{t+1}{m \lambda} 1_m \\
    \alpha^{t+1} &= \left( \lambda + \frac{1}{n} K_{11} \right)^{-1} \left( \lambda \alpha^{t} - K_{12} \frac{1}{\lambda m n} 1_m \right)
\end{align*}
where $1_m$ is the one vector,  $K_{11}$ is the gram sub-matrix for data $\x'$ and $K_{12}$ with respect to $\x'$ and $\x$, and $\alpha^0 = 0$ and $\beta^0 = 0$.

\section{Dataset with Known Density Ratios}
\label{appendix:geometric_dataset}
To
study the accuracy of the iteratively regularized estimations
of density ratios, we follow investigations of~\citet{Kanamori:12} and~\citet{Nguyen:10} to construct high dimensional data with exact known density ratios. Thereby, we both extend and generalize existing settings. For example datasets such as Ringnorm or Twonorm~\cite{Breiman:96} are special cases of a Gaussian Mixture approach with mean and covariance parameters of the respective mixture modes set accordingly. Similarly, experiments in~\citet{Nguyen:10} can be modeled by a Gaussian Mixture distribution. We make the task more complex by randomly sampling the number, the weights, the expectations and the covariances of mixture components of the underlying distributions from a space with higher dimension ($50$) compared to existing work. More specifically, we sample the expectations from $[0, 0.5]^{50}$ and the weights from $[0, 1]$ with additional normalization. Each distribution (source, target) gets assigned to a different mixture model. In each experiment there is a maximum component count of $4$ components. The randomly sampled $n \in \{1, 2, 3 \}$ is the number of components for the source and $4 - n$ for the target distribution. 

\section{Detailed Empirical Evaluations}
\label{appendix:sec:detaile_experiments}
\subsection{Known density ratios}
For each Gaussian mixture dataset we draw $5000$ samples from the underlying distributions. The selection and evaluation of the compared density ratio estimation method is done in a typical train/val/test split approach with split ratios $64/16/20$ respectively. The regularization (hyper) parameter $\lambda$ is selected from $\{10^{-6}, 10^{-5},\ldots, 10^4\}$ and for each experiment $10$ replicates are carried out. Accordingly, the number of iteration steps is selected from $\{1, 2, \ldots, 10\}$ based on the respective loss metric. We follow~\cite{Kanamori:12} in using the Gaussian kernel with kernel width set according to the median heuristic~\cite{Schoelkopf:02} for all compared density ratio estimation methods.

\subsection{Domain Adaptation}
\label{appendix:experiments}
The computation of the results for the domain adaptation benchmark experiment is based on gradient-based training of overall 9174 models where we built parts of our results on the codebase of~\citet{dinu2022aggregation} which results in the following specifics.

AmazonReviews (text data):
$11$ methods $\times\ 14$ parameters $\times\ 12$ domain adaptation tasks $\times\ 3$ seeds $= 5544$ trained models

MiniDomainNet (image data):
$11$ methods $\times\ 8$ parameters $\times\ 5$ domain adaptation tasks $\times\ 3$ seeds $= 1320$ trained models

HHAR (sensory data):
$11$ methods $\times\ 14$ parameters $\times\ 5$ domain adaptation tasks $\times\ 3$ seeds $= 2310$ trained models

Following~\citet{dinu2022aggregation} we used $11$ domain adaptation methods from the AdaTime~\citep{ragab2023adatime} benchmark. For each of these methods and domain adaptation modalities (text, image, sensory) we evaluate all $8$ density ratio estimation on all $22$ domain adaptation scenarios. 
For our experiments we follow the same model implementation and experimental setup as in~\citet{dinu2022aggregation} which results in the usage of fully connected networks for Amazon Reviews and a pretrained ResNet-18 backbone for MiniDomainNet. For training and selecting the density ratio estimation methods within this pipeline we perform an additional train/val split of $80/20$ on the datasets that are used for training the domain adaption methods. The regularization (hyper) parameter $\lambda$ is selected from $\{10^{-6}, 10^{-5},\ldots, 10^4\}$ and due to higher computational effort the number of iterations $t$ is selected from $\{1, 5, 10 \}$ based on the respective Bregman objective \eqref{eq:regularized_Bregman_objective}. We follow~\citet{Kanamori:12} in using the Gaussian kernel with kernel width set according to the median heuristic~\citep{Schoelkopf:02} for all compared density ratio estimation methods. Each experiment is run $3$ times. To test the performance of the compared methods the classification accuracy on the respective test sets of the target distribution is evaluated. We use the ensemble approach introduced in~\citet{dinu2022aggregation} as discussed in Section~\ref{sec:agg}. 

\subsection{Further Ablation Studies} \label{appendix:sec:add_exp}
\paragraph{SOTA Domain Adaptation}
We compare our method to the state-of-the-art results reported in~\citet{dinu2022aggregation}.
Table~\ref{tab:amzn} shows the results for each domain adaptation method (rows) averaged over 12 domain adaptation tasks and three seeds for the Amazon Reviews dataset.
While KuLSIF without our proposed iteration only achieves $0.787$ and does not outperform~\cite{dinu2022aggregation}, our proposed iterated KuLSIF improves~\cite{dinu2022aggregation} from $0.788$ to $0.790$.

\begin{table}[!h]
\centering
\begin{tabular}{l c c c}
\toprule
\multicolumn{4}{c}{\textbf{Domain Adaptation: Amazon Reviews}}\\
\cmidrule{1-4}
\textbf{DA-Method} & {\citet{dinu2022aggregation}} & KuLSIF & Iterated KuLSIF (ours) \\
\midrule
HoMM & $0.788 (\pm 0.010)$ & $0.777 (\pm 0.009)$ & $0.777 (\pm 0.010)$ \\
AdvSKM & $0.780 (\pm 0.009)$  & $0.779 (\pm 0.011)$ & $0.780 (\pm 0.008)$ \\
DIRT & $0.787 (\pm 0.008)$ & $0.787 (\pm 0.011)$ & $0.794 (\pm 0.011)$ \\
DDC & $0.780 (\pm 0.010)$ & $0.780 (\pm 0.010)$ & $0.780 (\pm 0.010)$ \\
CMD & $0.794 (\pm 0.009)$ & $0.790 (\pm 0.010)$ & $0.794 (\pm 0.008)$ \\
MMDA & $0.787 (\pm 0.011)$ & $0.786 (\pm 0.011)$ & $0.789 (\pm 0.010)$ \\
CoDATS & $0.796 (\pm 0.009)$ & $0.795 (\pm 0.012)$ & $0.798 (\pm 0.011)$ \\
Deep-Coral & $0.785 (\pm 0.009)$ & $0.784 (\pm 0.009)$ & $0.784 (\pm 0.009)$ \\
CDAN & $0.788 (\pm 0.010)$ & $0.787 (\pm 0.010)$ & $0.790 (\pm 0.010)$ \\
DANN & $0.797 (\pm 0.009)$ & $0.794 (\pm 0.013)$ & $0.800 (\pm 0.009)$ \\
DSAN & $0.795 (\pm 0.009)$ & $0.794 (\pm 0.011)$ & $0.800 (\pm 0.009)$ \\
\midrule 
Avg. & $0.788 (\pm 0.009)$ & $0.787 (\pm 0.011)$ & $0.790 (\pm 0.010)$ \\
\bottomrule
\end{tabular}
\caption{Mean and standard deviation (after $\pm$) of target classification accuracy on Amazon Reviews over three different random initialization of model weights and 12 domain adaptation tasks.}
\label{tab:amzn}
\end{table}

\paragraph{Extension to Deep Learning}
We follow Section 4.1 of~\citet{rhodes2020telescoping} to analyze the sample efficiency for challenging (different) densities. More precisely, we sample from an extremely peaked Gaussian $p \sim \mathcal{N}(0, 10^{-6})$ and a broad Gaussian $q \sim \mathcal{N}(0, 1)$. The density ratio estimators are based on both standard logistic regression and multi-layer networks.
In Figure~\ref{fig:exp2_3} (left) we can see that our iteration method improves sample efficiency for logistic regression. Additionally, it can be seen that our approach can be extended to neural network based models for which sample efficiency is improved as well.

\begin{figure}[!h]
  \centering
    \includegraphics[width=1\textwidth]{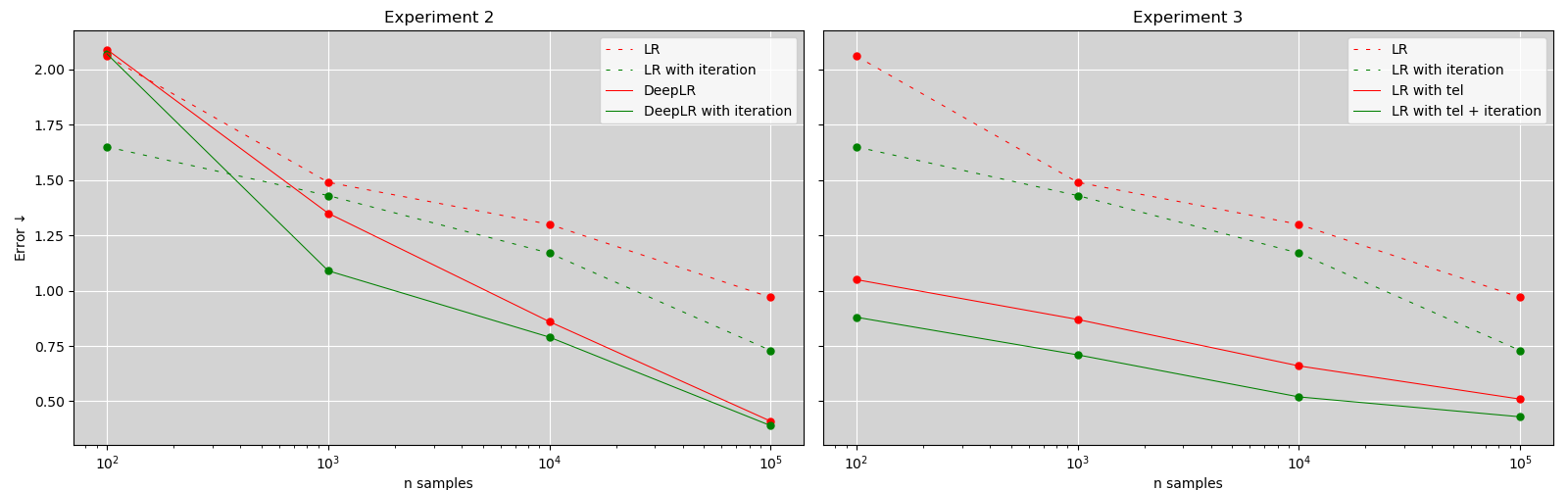}
  \caption{Sample efficiency curves for various density ratio estimators and approaches, the error is measured by L1-norm. Left: Comparison logistic regression and multi-layer network logistic regression density ratio estimators both without and with iteration. Right: Comparison of logistic regression with its iterated version, the telescoping (tel) approach from~\citet{rhodes2020telescoping} and the telescoping approach combined with our iterative method.}
  \label{fig:exp2_3}
\end{figure}

\paragraph{Combining Iteration with Telescoping}
We extend the previous experiment such that we sample from the same respective distributions and combine the telescoping framework of~\citet{rhodes2020telescoping} with our iteration method. In Figure~\ref{fig:exp2_3} (right) it can be seen that telescoping improves the non-iterated density ratio estimator as in~\citet{rhodes2020telescoping}. Additionally, the combination of telescoping and our iteration method further improves sample efficiency of both approaches.
\pagebreak

\subsection{Detailed Empirical Results}
In this section, we add all result tables for the datasets described in the main paper. Table~\ref{tab:first_table_appendix}-\ref{tab:mmda_mdn}
show all results on the MiniDomainNet image datasets, Table~\ref{tab:amazon_reviews_first}-\ref{tab:amazon_reviews_last} show all results on the Amazon Reviews text datasets, and Table~\ref{tab:HHAR_first}-\ref{tab:last_table_appendix} show all results on the HHAR sensory datasets. 
Each table shows the results for all the respective datasets given one domain adaptation method from AdaTime. Averaging the results in these tables over the scenario datasets and combining them leads to Table~\ref{tab:mdn}.

\begin{table*}[h]
\scalebox{0.725}{

}
\caption{Mean and standard deviation (after $\pm$) of target classification accuracy on HHAR for the aggregation method in objective~\eqref{eq:aggregation} with models computed by MMDA~\citep{Rahman2020}.}
\label{tab:last_table_appendix}
\end{table*}

\end{document}